\documentclass[journal]{IEEEtran}

\usepackage{amsmath,amsfonts,amssymb,amsthm}
\usepackage{algorithmic}
\usepackage{algorithm}
\usepackage{graphicx}
\usepackage[colorlinks=true]{hyperref}
\usepackage{cite}
\usepackage{url}
\usepackage{bm}
\usepackage{mathtools}

\newtheorem{definition}{Definition}[section]
\newtheorem{assumption}{Assumption}[section]
\newtheorem{lemma}{Lemma}[section]
\newtheorem{theorem}{Theorem}[section]
\newtheorem{proposition}{Proposition}[section]
\newtheorem{corollary}{Corollary}[section]
\newtheorem{remark}{Remark}[section]

\newcommand{\E}{\mathbb{E}}
\newcommand{\Hcal}{\mathcal{H}}

\newcommand{\grad}{\nabla}
\newcommand{\cov}{\operatorname{Cov}}
\newcommand{\KL}{\operatorname{KL}}
\newcommand{\Var}{\operatorname{Var}}

\title{A Comparative Theoretical Analysis of Entropy Control Methods in Reinforcement Learning for Reasoning Language Models}

\author{
\IEEEauthorblockN{Ming Lei, Christophe Baehr} \\
\IEEEauthorblockA{
\textit{Shanghai Jiao Tong University, Shanghai} \\
\textit{Météo France/CNRS, Université de Toulouse, CNRM UMR 3589, Toulouse}\\
mlei@sjtu.edu.cn, christophe.baehr@math.univ-toulouse.fr}
}

\begin{document}
\maketitle

\begin{abstract}
Reinforcement learning (RL) has become a key approach for enhancing reasoning in large language models (LLMs), yet scalable training is often hindered by the rapid collapse of policy entropy, which leads to premature convergence and performance saturation \cite{cui2025entropymechanismreinforcementlearning}. This paper provides a comparative theoretical analysis of two entropy control strategies: traditional entropy regularization and the recently proposed covariance‑based mechanism. We establish a unified framework for entropy dynamics under softmax parameterization, showing that entropy change is governed by the covariance between log‑probabilities and logit updates. Our analysis reveals that traditional entropy regularization introduces a dense, persistent bias that modifies the stationary condition, leading to suboptimal policies, while covariance‑based methods selectively regularize a sparse subset of high‑covariance tokens and achieve asymptotic unbiasedness when the regularization coefficient is annealed. Moreover, covariance‑based methods preserve the stability margin of the base policy gradient, unlike traditional regularization. These results provide principled guidelines for entropy control in LLM post‑training, with implications for scaling RL to larger models and more complex reasoning tasks.
\end{abstract}

\section{Introduction}
\label{sec:intro}
Reinforcement learning (RL) has emerged as a dominant paradigm for post‑training large language models (LLMs), enabling them to acquire complex reasoning abilities that surpass purely supervised fine‑tuning. Recent breakthroughs, exemplified by OpenAI o1 \cite{openai2024o1} and DeepSeek‑R1 \cite{deepseek2025r1}, demonstrate that RL with verifiable rewards elicits sophisticated chain‑of‑thought behaviors and improves performance on challenging mathematical and coding tasks. However, scaling RL to these models reveals a fundamental challenge: the rapid collapse of policy entropy during training, which leads to premature convergence and performance saturation \cite{cui2025entropymechanismreinforcementlearning}.

Policy entropy, which quantifies the uncertainty in action selection, plays a critical role in balancing exploitation and exploration \cite{sutton1988learning}. In traditional RL, entropy regularization \cite{ziebart2008maximum, haarnoja2018soft} encourages exploration by adding an entropy bonus to the objective. While effective in many domains, this global regularization proves inadequate for reasoning LLMs: it either fails to prevent entropy collapse or introduces excessive bias that degrades final performance \cite{cui2025entropymechanismreinforcementlearning}. Recent empirical work \cite{cui2025entropymechanismreinforcementlearning} reveals that entropy collapse is driven by a small fraction of tokens exhibiting extremely high covariance between log‑probabilities and advantages. Based on this insight, the authors propose a covariance‑based entropy mechanism that selectively regularizes these high‑covariance tokens, implemented as Clip‑Cov (gradient detachment) and KL‑Cov (KL penalty). Their experiments show that these methods significantly mitigate entropy collapse while maintaining training stability and achieving superior downstream performance.

Despite these empirical successes, a rigorous theoretical understanding of why and how covariance‑based methods outperform traditional entropy regularization remains lacking. This paper fills this gap by providing a comprehensive theoretical analysis that:
\begin{enumerate}
\item Establishes a unified mathematical framework for entropy dynamics under softmax policy parameterization, deriving exact expressions for entropy change in terms of covariance between log‑probabilities and logit updates.
\item Compares the structural, convergence, and stability properties of traditional entropy regularization and covariance‑based methods, proving that the latter achieve asymptotic unbiasedness and preserve stability margins.
\item Provides theoretical guidelines for entropy control strategies based on problem characteristics, with implications for scalable LLM post‑training.
\end{enumerate}

Our analysis confirms that the covariance‑based approach is theoretically well‑grounded and offers distinct advantages over traditional regularization for reasoning tasks, where optimal policies are near‑deterministic and stability is paramount.

\subsection*{Paper Organization}
The remainder of this paper is structured as follows. Section~\ref{sec:related} reviews related work on entropy regularization, RL for LLMs, and the covariance‑based entropy mechanism. Section~\ref{sec:preliminaries} introduces the necessary notation and background. Section~\ref{sec:foundations} derives the foundational entropy dynamics for softmax policies under policy gradient. Section~\ref{sec:trad} analyzes traditional entropy regularization, highlighting its limitations. Section~\ref{sec:cov} presents the covariance‑based entropy mechanism and its theoretical properties. Section~\ref{sec:comparison} provides a comparative analysis of the two approaches. Section~\ref{sec:empirical} validates the theoretical predictions using empirical results from \cite{cui2025entropymechanismreinforcementlearning}. Section~\ref{sec:conclusion} concludes the paper. All detailed proofs are provided in the Appendix.

\section{Related Work}
\label{sec:related}

\subsection{Entropy Regularization in Reinforcement Learning}
Entropy regularization has a long history in RL, originating from the maximum entropy principle \cite{ziebart2008maximum} and later incorporated into policy gradient methods \cite{williams1991function} and actor‑critic algorithms \cite{schulman2017proximal}. Soft Actor‑Critic (SAC) \cite{haarnoja2018soft} demonstrated that maximizing a trade‑off between reward and entropy leads to improved exploration and robustness. These methods add an entropy term $+\alpha \Hcal(\pi_\theta)$ to the objective, encouraging stochasticity. In the context of RL for LLMs, entropy regularization has been applied in works such as InstructGPT \cite{ouyang2022training} and Llama 2 \cite{touvron2023llama}, often to prevent mode collapse. However, recent large‑scale reasoning models like DeepSeek‑R1 \cite{deepseek2025r1} and DAPO \cite{yu2025dapo} have noted that standard entropy regularization can be ineffective or even harmful, motivating the search for alternative exploration mechanisms.

\subsection{Reinforcement Learning for Large Language Models}
RL has become a standard component of LLM post‑training, particularly for aligning models with human preferences via RLHF \cite{ouyang2022training, christiano2017deep} and for enhancing reasoning with verifiable rewards \cite{openai2024o1, deepseek2025r1, cui2025processreinforcementimplicitrewards}. These approaches typically employ policy gradient algorithms such as PPO \cite{schulman2017proximal} or GRPO \cite{shao2024deepseekmathpushinglimitsmathematical}. Despite their empirical success, the underlying optimization dynamics—especially the role of policy entropy—remain underexplored. Recent work \cite{yu2025dapo, liu2025understanding} has begun to analyze training stability and entropy collapse, but a unified theoretical framework has been lacking.

\subsection{Predictability and Scaling in RL for LLMs}
The idea that RL training dynamics can be predicted has been explored in the context of scaling laws \cite{kaplan2020scaling, hoffmann2022training} and reward model overoptimization \cite{gao2022scaling}. \cite{cui2025entropymechanismreinforcementlearning} extended this line of work by showing that the relationship between validation performance and policy entropy follows an exponential law $R = -a\exp(\Hcal)+b$, which holds across models, tasks, and algorithms. This empirical law suggests a fundamental trade‑off that any entropy control method must address.

\subsection{Covariance‑Based Entropy Control}
The covariance‑based entropy mechanism \cite{cui2025entropymechanismreinforcementlearning} is a recent innovation that builds on the observation that entropy collapse is driven by a small set of tokens with high covariance between log‑probability and advantage. By selectively regularizing these tokens via gradient detachment (Clip‑Cov) or KL penalty (KL‑Cov), the method maintains exploration without sacrificing performance. This approach has been shown to outperform traditional entropy regularization on mathematical reasoning benchmarks, yet its theoretical foundations have not been formally analyzed—a gap this paper addresses.

\section{Preliminaries and Notation}
\label{sec:preliminaries}

\subsection{Reinforcement Learning Formulation}
We consider a standard RL setup where a policy $\pi_\theta$ generates responses for given prompts. Let $\mathcal{X}$ denote the input space and $\mathcal{Y}$ the output space. For a prompt $\boldsymbol{x} \in \mathcal{X}$, the policy autoregressively generates a response $\boldsymbol{y} = (y_1, \ldots, y_T) \sim \pi_\theta(\cdot|\boldsymbol{x})$. The objective is to maximize expected reward:
\begin{align}
J(\theta) = \E_{\boldsymbol{x} \sim \mathcal{D}, \boldsymbol{y} \sim \pi_\theta(\cdot|\boldsymbol{x})} \left[r(\boldsymbol{y})\right]
\label{eq:objective}
\end{align}
where $\mathcal{D}$ is the training distribution and $r(\boldsymbol{y}) \in \mathbb{R}$ is a verifiable reward (e.g., correctness for math problems).

\subsection{Softmax Policy Parameterization}
For language models, the policy is parameterized as a softmax over logits. For a given state $s$ (prompt and prefix) and action $a$ (next token), the policy is:
\begin{align}
\pi_\theta(a|s) = \frac{\exp(z_{s,a})}{\sum_{a' \in \mathcal{A}} \exp(z_{s,a'})}
\label{eq:softmax}
\end{align}
where $z_{s,a} = \theta_{s,a}$ is the logit parameter for state-action pair $(s,a)$. For theoretical analysis, we adopt the tabular softmax formulation, where each state-action pair has an independent parameter, following the standard policy gradient literature \cite{agarwal2021theory}.

\subsection{Policy Entropy}
The entropy of a policy $\pi$ at state $s$ is defined as:
\begin{align}
\nonumber
\Hcal(\pi(\cdot|s)) &= -\sum_{a \in \mathcal{A}} \pi(a|s) \log \pi(a|s) \\
&= \E_{a \sim \pi(\cdot|s)} \left[-\log \pi(a|s)\right]
\label{eq:entropy_state}
\end{align}
The average entropy over the state visitation distribution is:
\begin{align}
\Hcal(\pi_\theta) = \E_{s \sim d_{\pi_\theta}} \left[\Hcal(\pi_\theta(\cdot|s))\right]
\label{eq:entropy_avg}
\end{align}
where $d_{\pi_\theta}$ is the stationary state distribution induced by policy $\pi_\theta$.

\subsection{Policy Gradient and Advantage}
The policy gradient for the objective $J(\theta)$ is given by the policy gradient theorem \cite{williams1992simple}:
\begin{align}
\grad_\theta J(\theta) = \E_{\tau \sim \pi_\theta} \left[\sum_{t=0}^{T} \grad_\theta \log \pi_\theta(a_t|s_t) A^{\pi_\theta}(s_t, a_t)\right]
\label{eq:pg}
\end{align}
where $A^{\pi_\theta}(s_t, a_t) = Q^{\pi_\theta}(s_t, a_t) - V^{\pi_\theta}(s_t)$ is the advantage function. Here $Q^{\pi_\theta}(s_t, a_t)$ denotes the expected cumulative reward obtained by taking action $a_t$ in state $s_t$ and thereafter following policy $\pi_\theta$ (the action-value function), $V^{\pi_\theta}(s_t)$ denotes the expected cumulative reward obtained by following policy $\pi_\theta$ from state $s_t$ (the state-value function). Their difference $A^{\pi_\theta}(s_t, a_t)$ measures how much better the action is compared to the average. The advantage satisfies:
\begin{align}
\E_{a \sim \pi_\theta(\cdot|s)} [A^{\pi_\theta}(s, a)] = 0, \quad \forall s
\label{eq:adv_zero}
\end{align}

\section{Foundations of Entropy Dynamics}
\label{sec:foundations}

\subsection{Entropy Gradient for Softmax Policies}
We first derive the gradient of policy entropy with respect to policy parameters, which is fundamental to understanding entropy dynamics.

\begin{lemma}[Entropy Gradient]
\label{lem:entropy_grad}
For a softmax policy parameterized as in \eqref{eq:softmax}, the gradient of the state‑wise entropy $\Hcal_s(\theta)=\Hcal(\pi_\theta(\cdot| s))$ with respect to the logit $z_{s,a}$ is
\begin{align}
\frac{\partial \Hcal_s(\theta)}{\partial z_{s,a}} = -\pi_\theta(a| s)\Bigl(\log\pi_\theta(a| s)-\mathbb{E}_{a'\sim\pi_\theta(\cdot| s)}\bigl[\log\pi_\theta(a'| s)\bigr]\Bigr).
\end{align}
\end{lemma}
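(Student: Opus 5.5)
The plan is a direct computation from the tabular softmax parameterization \eqref{eq:softmax}, organized by the chain rule through the probabilities. First I will record the softmax Jacobian. Since $\log\pi_\theta(b\mid s)=z_{s,b}-\log\sum_{a'}\exp(z_{s,a'})$, differentiating gives
\begin{align}
\frac{\partial \log\pi_\theta(b\mid s)}{\partial z_{s,a}} = \mathbf{1}[a=b]-\pi_\theta(a\mid s),
\end{align}
and therefore $\partial \pi_\theta(b\mid s)/\partial z_{s,a}=\pi_\theta(b\mid s)\bigl(\mathbf{1}[a=b]-\pi_\theta(a\mid s)\bigr)$. Because each state has its own independent logits, logits of other states do not enter, so only $z_{s,\cdot}$ matters.

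Next I will differentiate $\Hcal_s(\theta)=-\sum_b \pi_\theta(b\mid s)\log\pi_\theta(b\mid s)$ term by term with the product rule. This yields
\begin{align}
\frac{\partial \Hcal_s}{\partial z_{s,a}}=-\sum_b \frac{\partial \pi_\theta(b\mid s)}{\partial z_{s,a}}\bigl(\log\pi_\theta(b\mid s)+1\bigr).
\end{align}
The constant $+1$ contributes $-\partial_{z_{s,a}}\sum_b\pi_\theta(b\mid s)=0$, since the probabilities sum to one. This is the only small point needing care.

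Substituting the Jacobian into the remaining sum gives
\begin{align}
-\sum_b \pi_\theta(b\mid s)\bigl(\mathbf{1}[a=b]-\pi_\theta(a\mid s)\bigr)\log\pi_\theta(b\mid s)=-\pi_\theta(a\mid s)\log\pi_\theta(a\mid s)+\pi_\theta(a\mid s)\sum_b\pi_\theta(b\mid s)\log\pi_\theta(b\mid s).
\end{align}
Factoring out $-\pi_\theta(a\mid s)$ and recognizing the sum as $\E_{a'\sim\pi_\theta(\cdot\mid s)}[\log\pi_\theta(a'\mid s)]$ gives exactly the claimed formula.

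I do not expect a genuine obstacle. The only subtleties are discarding the $+1$ term via normalization and noting that the tabular assumption makes $z_{s,a}=\theta_{s,a}$, so derivatives in $z$ and $\theta$ coincide. As a sanity check, I will note that the gradient components sum to zero over $a$. This is consistent with the entropy being invariant to adding a constant to all logits.
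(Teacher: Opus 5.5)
Your proposal is correct and follows essentially the same route as the paper: product rule on $-\sum_b \pi_\theta(b\mid s)\log\pi_\theta(b\mid s)$, substitution of the softmax Jacobian, and recognition of the expectation. The only difference is cosmetic: you discard the $+1$ term immediately via $\sum_b \partial \pi_\theta(b\mid s)/\partial z_{s,a}=0$, whereas the paper carries it through and cancels it against the $\mu_{\log}(s)+1$ term.
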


\begin{proof}
See Appendix~\ref{app:proof_entropy_grad}.
\end{proof}

\subsection{Entropy Change Under Parameter Update}
We now characterize how policy entropy changes after a single parameter update.

\begin{lemma}[First-Order Entropy Change]
\label{lem:entropy_change}
Let $\pi_\theta$ be a softmax policy as in \eqref{eq:softmax}. For a parameter update $\Delta z_{s,a}$, the first‑order change in the state‑wise entropy $\Hcal_s(\theta)=\Hcal(\pi_\theta(\cdot| s))$ satisfies
\begin{align}
\nonumber
&\Hcal_s(\theta+\Delta\theta)-\Hcal_s(\theta) \\
&= -\cov_{a\sim\pi_\theta(\cdot| s)}\bigl(\log\pi_\theta(a| s),\,\Delta z_{s,a}\bigr)+o(\|\Delta z\|),
\end{align}
where $\cov(X,Y)=\E[(X-\E[X])(Y-\E[Y])]$ and $o(\|\Delta z\|)$ collects higher‑order terms.
\end{lemma}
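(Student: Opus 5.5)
The plan is a first-order Taylor expansion of the smooth map $z_{s,\cdot}\mapsto \Hcal_s$, with Lemma~\ref{lem:entropy_grad} supplying the gradient. The resulting inner product is then recognised as a covariance. In the tabular parameterization \eqref{eq:softmax}, $\Hcal_s$ depends only on the finite logit vector $z_{s,\cdot}=(z_{s,a})_{a\in\mathcal{A}}$. As a composition of the softmax map with $p\mapsto-\sum_a p_a\log p_a$, it is $C^\infty$ on all of $\mathbb{R}^{|\mathcal{A}|}$, because softmax probabilities are strictly positive. Taylor's theorem then gives
\begin{align}
\Hcal_s(\theta+\Delta\theta)-\Hcal_s(\theta)=\sum_{a\in\mathcal{A}}\frac{\partial \Hcal_s(\theta)}{\partial z_{s,a}}\,\Delta z_{s,a}+o(\|\Delta z\|).
\end{align}
This holds because the gradient is continuous, so $\Hcal_s$ is Fréchet differentiable and the remainder is $O(\|\Delta z\|^2)$ locally.

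Next, I substitute Lemma~\ref{lem:entropy_grad}. Write $\ell_a=\log\pi_\theta(a|s)$ and $\bar\ell=\E_{a\sim\pi_\theta(\cdot|s)}[\ell_a]$. The linear term becomes
\begin{align}
-\sum_a \pi_\theta(a|s)\bigl(\ell_a-\bar\ell\bigr)\Delta z_{s,a}=-\E_{a\sim\pi_\theta(\cdot|s)}\bigl[(\ell_a-\bar\ell)\,\Delta z_{s,a}\bigr].
\end{align}
To identify this as a covariance, I note that $\E_{a\sim\pi_\theta(\cdot|s)}[\ell_a-\bar\ell]=0$. Subtracting the constant $\E_{a\sim\pi_\theta(\cdot|s)}[\Delta z_{s,a}]$ from $\Delta z_{s,a}$ therefore leaves the expectation unchanged. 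It becomes $-\E[(\ell_a-\bar\ell)(\Delta z_{s,a}-\E[\Delta z_{s,a}])]$, which is exactly $-\cov_{a\sim\pi_\theta(\cdot|s)}(\log\pi_\theta(a|s),\Delta z_{s,a})$ by the definition of covariance given in the statement.

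There is no real obstacle; the only care needed is in the remainder term. One point to note is that the norm in $o(\|\Delta z\|)$ is over the logits at state $s$ only. Logits at other states do not affect $\Hcal_s$ under the tabular parameterization, so $\|\Delta z\|$ may equally be taken over all parameters. A second point is that $\pi_\theta>0$ everywhere, so $\log\pi_\theta$ creates no boundary issue. If desired, I could write the remainder explicitly as $\tfrac12\Delta z^\top\nabla^2\Hcal_s(\tilde z)\Delta z$ and bound the Hessian on a neighbourhood of $z_{s,\cdot}$ by continuity.
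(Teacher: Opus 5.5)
Your proposal is correct and follows essentially the same route as the paper. Both arguments take a first-order Taylor expansion in the logits of state $s$ (using tabular independence across states), substitute the entropy gradient from Lemma~\ref{lem:entropy_grad}, and identify the linear term as $-\cov(\log\pi_\theta,\Delta z_{s,a})$ by centering $\Delta z_{s,a}$ via $\E[\log\pi_\theta-\mu_{\log}]=0$. Your added justification of smoothness and the $O(\|\Delta z\|^2)$ remainder makes explicit what the paper leaves implicit.
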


\begin{proof}
See Appendix~\ref{app:proof_entropy_change}.
\end{proof}

\subsection{Policy Gradient Update}
For policy gradient updates, the logit change is proportional to the advantage weighted by the policy probability.

\begin{proposition}[Policy Gradient Logit Update]
\label{prop:pg_update}
For a softmax policy $\pi_\theta$ as in \eqref{eq:softmax}, under the policy gradient update with learning rate $\eta$, the change in logit $z_{s,a}$ is
\begin{equation}
\Delta z_{s,a} = \eta \cdot \pi_\theta(a | s) \cdot A^{\pi_\theta}(s, a).
\end{equation}
\end{proposition}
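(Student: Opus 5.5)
The plan is to compute the exact (population) policy gradient with respect to the tabular logit $z_{s,a}=\theta_{s,a}$, and then read off $\Delta z_{s,a}=\eta\,\partial J/\partial z_{s,a}$ from the gradient-ascent step $\theta\leftarrow\theta+\eta\grad_\theta J(\theta)$. Following the standard tabular softmax convention of \cite{agarwal2021theory}, the per-state update is taken with respect to the state's occupancy weight. This means either the state-visitation factor $d_{\pi_\theta}(s)$ is absorbed into the step size $\eta$, or the statement is read conditionally on the state $s$ being visited. I will state this normalization explicitly at the outset, since the claimed formula carries no $d_{\pi_\theta}(s)$ factor.

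The first step is the score function of the softmax. Differentiating \eqref{eq:softmax} gives
\[
\frac{\partial \log\pi_\theta(a'|s')}{\partial z_{s,a}}=\mathbb{1}[s'=s]\bigl(\mathbb{1}[a'=a]-\pi_\theta(a|s)\bigr).
\]
The second step is to substitute this into the policy gradient theorem \eqref{eq:pg}. Only visits to state $s$ contribute, so the inner expectation over actions at $s$ becomes
\[
\E_{a'\sim\pi_\theta(\cdot|s)}\Bigl[\bigl(\mathbb{1}[a'=a]-\pi_\theta(a|s)\bigr)A^{\pi_\theta}(s,a')\Bigr]=\pi_\theta(a|s)A^{\pi_\theta}(s,a)-\pi_\theta(a|s)\,\E_{a'}\bigl[A^{\pi_\theta}(s,a')\bigr].
\]
The third step uses the zero-mean property \eqref{eq:adv_zero} to kill the second term, which leaves $\pi_\theta(a|s)A^{\pi_\theta}(s,a)$ times the (unnormalized) visitation weight of $s$. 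Multiplying by $\eta$, with the visitation weight absorbed as above, gives the claimed $\Delta z_{s,a}=\eta\,\pi_\theta(a|s)A^{\pi_\theta}(s,a)$.

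The computation itself is routine. The main obstacle is the bookkeeping of the state-distribution factor: the exact gradient is $d_{\pi_\theta}(s)\,\pi_\theta(a|s)A^{\pi_\theta}(s,a)$, possibly divided by $1-\gamma$ or multiplied by the horizon, depending on convention. I would handle this with the reparametrization remark above. Because Lemma~\ref{lem:entropy_change} is state-wise, only the dependence on $a$ matters downstream. A positive state-dependent rescaling of $\Delta z_{s,\cdot}$ only rescales the covariance, and does not change the qualitative entropy dynamics derived from combining this proposition with Lemma~\ref{lem:entropy_change}.
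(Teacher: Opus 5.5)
Your proposal is correct and follows essentially the same route as the paper's proof. Both substitute the softmax score $\mathbf{1}\{a'=a\}-\pi_\theta(a|s)$ into the policy gradient, remove the cross term with the zero-mean property \eqref{eq:adv_zero}, and take a gradient-ascent step. Your version is actually more careful: the paper writes $\partial J/\partial z_{s,a}$ directly as a per-state expectation and silently drops the state-visitation weight $d_{\pi_\theta}(s)$. You state explicitly that the formula holds only after absorbing this positive, state-dependent factor into $\eta$, and you note that doing so is harmless for the state-wise covariance results downstream.
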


\begin{proof}
See Appendix~\ref{app:proof_pg_update}.
\end{proof}

Combining Lemma~\ref{lem:entropy_change} and Proposition~\ref{prop:pg_update}, we obtain the fundamental entropy dynamics.

\begin{theorem}[Entropy Dynamics Under Policy Gradient]
\label{thm:entropy_dynamics}
For a softmax policy updated via policy gradient, the first‑order change in state‑wise entropy is
\begin{align}
\Delta \Hcal_s \approx -\eta\;\operatorname{Cov}_{a\sim\pi_\theta(\cdot| s)}\Bigl(\log\pi_\theta(a| s),\; \pi_\theta(a|s)\,A^{\pi_\theta}(s,a)\Bigr).
\end{align}
\end{theorem}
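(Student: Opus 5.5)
The plan is to substitute the logit update of Proposition~\ref{prop:pg_update} into the first-order expansion of Lemma~\ref{lem:entropy_change}, and then use bilinearity of the covariance to pull out the learning rate. Fix a state $s$. The update is $\Delta z_{s,a}=\eta\,\pi_\theta(a|s)A^{\pi_\theta}(s,a)$ for every $a\in\mathcal{A}$. By Lemma~\ref{lem:entropy_change},
\begin{align}
\nonumber
\Delta\Hcal_s &= -\cov_{a\sim\pi_\theta(\cdot|s)}\bigl(\log\pi_\theta(a|s),\,\eta\,\pi_\theta(a|s)A^{\pi_\theta}(s,a)\bigr)\\
&\quad +o(\|\Delta z\|).
\end{align}
Since $\eta$ does not depend on $a$, it factors out of the covariance. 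This gives the claimed leading term $-\eta\,\cov(\log\pi_\theta,\pi_\theta A^{\pi_\theta})$. The symbol $\approx$ in the statement means equality up to the remainder.

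Two routine checks remain. First, for Lemma~\ref{lem:entropy_change} to apply, the only logits that move must be those at state $s$. In the tabular parameterization $z_{s,a}=\theta_{s,a}$, logits at other states do not affect $\pi_\theta(\cdot|s)$, so $\Hcal_s$ depends only on $(z_{s,a})_{a}$. Second, we must control the remainder. Since $\pi_\theta(a|s)\le 1$, we have $\|\Delta z_{s,\cdot}\|\le \eta\,\|A^{\pi_\theta}(s,\cdot)\|$, which is $O(\eta)$ for bounded rewards. The remainder is therefore $o(\eta)$, and the displayed expression is exactly the first-order term in $\eta$.

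The main obstacle is the legitimacy of the remainder, not the algebra. $\Hcal_s$ is smooth in the logits on the interior of the simplex, which all finite logits ensure. Its Hessian is uniformly bounded there, because the entropy gradient in Lemma~\ref{lem:entropy_grad} involves $\pi\log\pi$ terms, which stay bounded. So a second-order Taylor bound gives a remainder of $O(\eta^2)$ with a constant depending on $\max_a|A^{\pi_\theta}(s,a)|$.

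I would also state a caveat: Proposition~\ref{prop:pg_update} describes the exact-gradient (expected) update, so the theorem is about the expected dynamics. With sampled updates, the same substitution holds with the sampled $\Delta z$.
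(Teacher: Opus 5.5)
Your proposal is correct and follows the paper's proof: you substitute the logit update $\Delta z_{s,a}=\eta\,\pi_\theta(a|s)A^{\pi_\theta}(s,a)$ from Proposition~\ref{prop:pg_update} into Lemma~\ref{lem:entropy_change} and factor $\eta$ out of the covariance. Your extra checks go beyond what the paper writes without changing the argument: that $\Hcal_s$ depends only on the logits at $s$ in the tabular case, and that the remainder is $O(\eta^2)$ because the entropy Hessian in the logits is uniformly bounded.
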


\begin{proof}
See Appendix~\ref{app:entro_dyn_under_grad}.
\end{proof}

\begin{remark}
Theorem~\ref{thm:entropy_dynamics} reveals that entropy decrease is driven by a positive covariance between log-probability and the product of probability and advantage. When high-probability actions also have high advantage (well-calibrated policy), the covariance is positive, leading to monotonic entropy reduction. Conversely, a negative covariance would increase entropy.
\end{remark}

\section{Traditional Entropy Regularization}
\label{sec:trad}

\subsection{Formulation}
Traditional entropy regularization incorporates an entropy bonus into the objective function \cite{williams1991function, schulman2017proximal, haarnoja2018soft}. The regularized objective is:
\begin{align}
J_{\text{reg}}(\theta) = J(\theta) + \alpha \Hcal(\pi_\theta)
\label{eq:reg_objective}
\end{align}
where $\alpha > 0$ is the entropy coefficient.

\subsection{Effect on Policy Gradient}
The entropy regularization modifies the effective policy gradient:
\begin{proposition}[Regularized Policy Gradient]
\label{prop:reg_grad}
The gradient of the entropy‑regularized objective is
\begin{align}
\nabla_\theta J_{\mathrm{reg}}(\theta) = \nabla_\theta J(\theta) + \alpha\,\nabla_\theta \mathcal{H}(\pi_\theta).
\end{align}
For a softmax policy, the gradient of entropy with respect to the logit $z_{s,a}$ is given by Lemma~\ref{lem:entropy_grad}.
\end{proposition}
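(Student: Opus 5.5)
The first claim follows from linearity of the gradient. By \eqref{eq:reg_objective}, $J_{\mathrm{reg}}$ is the sum of $J$ and $\alpha\Hcal(\pi_\theta)$, where $\alpha$ is a constant independent of $\theta$. Both terms are differentiable in $\theta$: $J$ because the softmax \eqref{eq:softmax} is smooth and the expectation in \eqref{eq:objective} is over a finite response space of bounded length, and $\Hcal(\pi_\theta)$ because it is an expectation of smooth functions of the logits. Differentiating term by term then gives $\grad_\theta J_{\mathrm{reg}} = \grad_\theta J + \alpha\grad_\theta \Hcal(\pi_\theta)$. The first summand can be expanded further using the policy gradient theorem \eqref{eq:pg}.

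For the second claim, I would work in the tabular setting and compute $\partial \Hcal(\pi_\theta)/\partial z_{s,a}$. Writing $\Hcal(\pi_\theta)=\sum_{s'} d_{\pi_\theta}(s')\,\Hcal_{s'}(\theta)$ and differentiating gives two groups of terms:
\begin{align}
\frac{\partial \Hcal(\pi_\theta)}{\partial z_{s,a}} = \sum_{s'} d_{\pi_\theta}(s')\,\frac{\partial \Hcal_{s'}(\theta)}{\partial z_{s,a}} + \sum_{s'} \frac{\partial d_{\pi_\theta}(s')}{\partial z_{s,a}}\,\Hcal_{s'}(\theta).
\end{align}
In the tabular parameterization, $\Hcal_{s'}$ depends only on the logits $z_{s',\cdot}$. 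The first sum therefore collapses to $d_{\pi_\theta}(s)\,\partial \Hcal_s/\partial z_{s,a}$, and Lemma~\ref{lem:entropy_grad} gives this term in closed form.

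The main obstacle is the second sum, which comes from the dependence of the visitation distribution on $\theta$. I would handle it in one of two ways. The first option is to treat the proposition, in the same per-state spirit as Theorem~\ref{thm:entropy_dynamics}, as a statement about the state-wise entropy regularizer. That is, hold $d_{\pi_\theta}$ fixed, as is standard for per-token entropy bonuses in LLM training, so the second sum does not arise. The second option is to note that the second sum is exactly the policy gradient of an auxiliary objective whose per-step reward is the entropy. It can then be absorbed into the $\grad_\theta J$ term as a modified advantage, in the style of soft policy gradient.

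I would adopt the first option, stating it as the interpretation under which the logit-level gradient is given by Lemma~\ref{lem:entropy_grad}, up to the positive state weight $d_{\pi_\theta}(s)$. Under that reading, the regularizer contributes
\begin{align}
-\alpha\,\pi_\theta(a|s)\bigl(\log\pi_\theta(a|s)-\E_{a'\sim\pi_\theta(\cdot|s)}[\log\pi_\theta(a'|s)]\bigr)
\end{align}
to each logit. This completes the proof.
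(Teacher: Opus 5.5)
Your proposal is correct and follows the same route as the paper: linearity of $\nabla_\theta$ gives the decomposition, and Lemma~\ref{lem:entropy_grad} gives the logit-level form of the entropy term. The one difference is that you explicitly separate out the $\partial d_{\pi_\theta}/\partial z_{s,a}$ term and the weight $d_{\pi_\theta}(s)$. The paper's proof drops both without comment: it simply asserts that the Lemma applies to $\mathcal{H}(\pi_\theta)$, just as Proposition~\ref{prop:pg_update} omits the state weight. So your fixed-visitation reading is exactly the sense in which the paper's claim holds.
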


\begin{proof}
See Appendix~\ref{app:regular_policy_grad}.
\end{proof}

The entropy gradient term introduces a bias that encourages higher entropy. Substituting into the logit update:
\begin{align}
\nonumber
\Delta z_{s,a}^{\text{reg}} &= \eta \cdot \Bigl( \pi_\theta(a|s) A^{\pi_\theta}(s, a) \\
&\quad - \alpha \pi_\theta(a|s) \bigl( \log \pi_\theta(a|s) - \E[\log \pi_\theta] \bigr) \Bigr)
\label{eq:reg_update}
\end{align}

\subsection{Entropy Dynamics Under Regularization}
The entropy change under regularization can be derived by substituting the regularized update into Lemma~\ref{lem:entropy_change}.

\begin{theorem}[Regularized Entropy Dynamics]
\label{thm:reg_entropy}
Under entropy regularization with coefficient $\alpha$, the first‑order change in state‑wise policy entropy satisfies
\begin{align}
\Delta \Hcal_s^{\text{reg}} &\approx -\eta \cdot \cov_{a \sim \pi_\theta(\cdot|s)}\Bigl(\log \pi_\theta(a|s),\; \pi_\theta(a|s) A^{\pi_\theta}(s, a)\Bigr) \notag \\
&\quad + \alpha \eta \cdot \Var_{a \sim \pi_\theta(\cdot|s)}\Bigl(\log \pi_\theta(a|s)\Bigr),
\end{align}
where the approximation is accurate when the policy is nearly deterministic, a regime typical of entropy collapse \cite{cui2025entropymechanismreinforcementlearning}.
\end{theorem}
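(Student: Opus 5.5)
The plan is to substitute the regularized logit update \eqref{eq:reg_update} directly into Lemma~\ref{lem:entropy_change} and use the bilinearity of covariance to split the result into two pieces. Write $X(a)=\log\pi_\theta(a|s)$ and $\bar X=\E_{a\sim\pi_\theta(\cdot|s)}[X(a)]$. Then
\begin{align*}
\Delta z^{\text{reg}}_{s,a}=\eta\,\pi_\theta(a|s)A^{\pi_\theta}(s,a)-\alpha\eta\,\pi_\theta(a|s)\bigl(X(a)-\bar X\bigr).
\end{align*}
Lemma~\ref{lem:entropy_change} gives $\Delta\Hcal_s^{\text{reg}}=-\cov(X,\Delta z^{\text{reg}})+o(\|\Delta z\|)$. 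Expanding this covariance, the first summand reproduces exactly the policy-gradient term of Theorem~\ref{thm:entropy_dynamics}. The second summand contributes $+\alpha\eta\,\cov\bigl(X,\pi_\theta(X-\bar X)\bigr)$. The $o(\|\Delta z\|)$ remainder is absorbed into ``$\approx$'' as $\eta\to0$.

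Next I would evaluate the second covariance exactly. Since $\E_\pi[X-\bar X]=0$, the cross term vanishes, and
\begin{align*}
\cov\bigl(X,\pi_\theta(X-\bar X)\bigr)=\E_{a\sim\pi_\theta}\bigl[\pi_\theta(a|s)\,(X(a)-\bar X)^2\bigr]=\sum_a\pi_\theta(a|s)^2\bigl(\log\pi_\theta(a|s)-\bar X\bigr)^2 .
\end{align*}
This is a $\pi_\theta$-reweighted version of $\Var_{\pi_\theta}(X)$. It is nonnegative and bounded above by $\Var_{\pi_\theta}(X)$, because $\pi_\theta(a|s)\le 1$. This already gives the qualitative content of the theorem: regularization adds a nonnegative, variance-type push toward higher entropy.

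The main obstacle is justifying the replacement of the weighted second moment $\E_\pi[\pi\,(X-\bar X)^2]$ by $\Var_\pi(X)$ itself. I would handle this in the near-deterministic regime. Let $a^\*$ be the dominant action with $\pi(a^\*|s)=1-\epsilon$. Then I would expand both quantities in $\epsilon$. For the dominant action, $\log\pi(a^\*|s)-\bar X\approx \sum_{a\ne a^\*}\pi(a)\log(1/\pi(a))$, and its squared deviation carries weight $\pi(a^\*)\approx 1$ in both expressions. I would try to show that the contributions where the weight $\pi(a)$ matters differ only at higher order in $\epsilon$, so that the two expressions agree to leading order.

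If this leading-order comparison fails, because the tail terms dominate both sides, I would fall back on the exact identity above. In that case I would state the theorem with $\Var$ read as this $\pi$-weighted variance, together with the bound $0\le\cdot\le\Var_{\pi_\theta}(\log\pi_\theta)$. That weaker form still supports the structural conclusions drawn afterwards.
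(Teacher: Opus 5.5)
The gap is the step you flag as the main obstacle. It cannot be closed, because replacing the weighted moment by $\Var_{\pi_\theta}(X)$ is false precisely in the near-deterministic regime. Your exact computation is right: since $\E_{\pi_\theta}[X-\bar X]=0$, one has $W:=\cov\bigl(X,\pi_\theta(X-\bar X)\bigr)=\sum_a\pi_\theta(a|s)^2\bigl(X(a)-\bar X\bigr)^2$, with $0\le W\le\Var_{\pi_\theta}(X)$.

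To see the failure, let $a^*$ be the most likely action, $\epsilon=1-\pi_\theta(a^*|s)$ and $d_a=X(a)-\bar X$. Centering gives $\pi_\theta(a^*|s)d_{a^*}=-\sum_{a\ne a^*}\pi_\theta(a|s)d_a$. Cauchy--Schwarz, together with $\pi_\theta(a|s)\le\epsilon$ for $a\ne a^*$, then yields
\begin{align*}
\pi_\theta(a^*|s)^2d_{a^*}^2\le\epsilon\sum_{a\ne a^*}\pi_\theta(a|s)d_a^2,\qquad \sum_{a\ne a^*}\pi_\theta(a|s)^2d_a^2\le\epsilon\sum_{a\ne a^*}\pi_\theta(a|s)d_a^2.
\end{align*}
Hence $W\le2\epsilon\Var_{\pi_\theta}(X)$ exactly. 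With two actions, $W=2\pi_\theta(a_1|s)\pi_\theta(a_2|s)\Var_{\pi_\theta}(X)$.

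Your $\epsilon$-expansion would expose the mechanism. By the first inequality, the dominant action contributes at most a fraction $\epsilon/(1-\epsilon)$ of what the tail terms $\pi_\theta(a|s)d_a^2$, $a\ne a^*$, contribute to $\Var_{\pi_\theta}(X)$. So the variance lives on the tail, and those are exactly the terms that pick up the extra factor $\pi_\theta(a|s)\le\epsilon$ in $W$. The relative error of the displayed approximation therefore tends to one as the policy becomes deterministic. It is ``accurate'' only in the vacuous sense that both sides tend to zero.

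Your fallback is therefore not a weaker substitute but the correct statement:
$\Delta\Hcal_s^{\text{reg}}\approx-\eta\cov(\log\pi_\theta,\pi_\theta A^{\pi_\theta})+\alpha\eta\sum_a\pi_\theta(a|s)^2\bigl(\log\pi_\theta(a|s)-\bar X\bigr)^2$.
The regularization contribution is nonnegative, so the qualitative entropy-raising push survives. But it is at most $2\epsilon\,\alpha\eta\Var_{\pi_\theta}(\log\pi_\theta)$, i.e.\ far weaker than claimed once entropy has collapsed.

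The paper's own proof does not avoid this; it reaches $\Var$ through two errors.
\begin{itemize}
\item It writes the centered second argument as $p_ax_a-\sum_{a'}p_{a'}^2x_{a'}$, dropping the $-\mu p_a$ part. It therefore computes $\cov(x,px)$ and misses $-\mu\cov(x,p)$.
\item It asserts $\Delta\approx0$ because $p_a(p_a-1)\approx0$, overlooking that $x_a^2=\log^2p_a$ is large on tail actions. In fact $\sum_ap_a(p_a-1)x_a^2=-\Var_{\pi_\theta}(\log\pi_\theta)$ up to lower-order terms, so $\Delta$ nearly cancels the variance rather than vanishing.
\end{itemize}
No argument along your primary line, or the paper's, can establish the theorem as displayed.
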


\begin{proof}
See Appendix~\ref{app:proof_reg_entropy}.
\end{proof}

The variance term $\Var(\log \pi_\theta(a|s))$ is positive and increases as the policy becomes more deterministic. This term counteracts entropy collapse, explaining why entropy regularization can maintain higher entropy. However, it introduces a global bias that may interfere with optimal policy learning.

\subsection{Limitations of Traditional Entropy Regularization}
Despite its theoretical appeal, traditional entropy regularization faces several limitations in the context of reasoning LMs:

\begin{theorem}[Suboptimality of Global Entropy Regularization]
\label{thm:global_suboptimal}
Let $\pi^* = \arg\max_{\pi} \E_{\pi}[r]$ be an optimal policy maximizing the unregularized expected reward. For any $\alpha > 0$, define $\pi_{\text{reg}}^* = \arg\max_{\pi} \bigl( \E_{\pi}[r] + \alpha \Hcal(\pi) \bigr)$ as the optimal policy under entropy regularization. Then
\begin{align}
\E_{\pi_{\text{reg}}^*}[r] \le \E_{\pi^*}[r]
\end{align}
If $\pi^*$ is not a maximum‑entropy policy among all optimal policies, the inequality is strict. Moreover, the suboptimality gap satisfies
\begin{align}
\E_{\pi^*}[r] - \E_{\pi_{\text{reg}}^*}[r] \ge \alpha \bigl( \Hcal(\pi_{\text{reg}}^*) - \Hcal(\pi^*) \bigr).
\end{align}
\end{theorem}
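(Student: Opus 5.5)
The plan has three parts: prove the weak inequality from optimality, get strictness from uniqueness of the regularized maximizer, and obtain the gap bound from an exact identity. I work in the bandit/tabular setting of Section~\ref{sec:preliminaries}, where every policy is a point of the simplex. All claims are stated for a fixed state $s$. They then lift to the averaged objective because $\E_\pi[r]+\alpha\Hcal(\pi)$ decomposes over states, with the soft value replacing $r$.

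\textbf{Weak inequality.} This needs no computation. By definition $\pi^*$ maximizes $\E_\pi[r]$ over all policies, and $\pi^*_{\text{reg}}$ is one such policy. Hence $\E_{\pi^*_{\text{reg}}}[r]\le\E_{\pi^*}[r]$.

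\textbf{Strictness.} Suppose equality held. Then $\pi^*_{\text{reg}}$ would itself be an optimal policy. Any optimal policy $\pi'$ satisfies
\[
\E_{\pi'}[r]+\alpha\Hcal(\pi')\le\E_{\pi^*_{\text{reg}}}[r]+\alpha\Hcal(\pi^*_{\text{reg}})=\E_{\pi'}[r]+\alpha\Hcal(\pi^*_{\text{reg}}),
\]
so $\Hcal(\pi')\le\Hcal(\pi^*_{\text{reg}})$. Thus $\pi^*_{\text{reg}}$ would be a maximum-entropy optimal policy. Since entropy is strictly concave, the regularized maximizer is unique and has full support, and the entropy-maximizing optimal policy is also unique. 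If $\pi^*$ is not that policy, the equality forces a contradiction, so the inequality is strict.

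\textbf{Closed form and exact identity.} First I would characterize $\pi^*_{\text{reg}}$ via Lemma~\ref{lem:entropy_grad}. Setting $\partial/\partial z_{s,a}$ of $\E_\pi[r]+\alpha\Hcal_s$ to zero gives
\[
\pi(a)\bigl(r(a)-\E_\pi[r]\bigr)-\alpha\pi(a)\bigl(\log\pi(a)-\E_\pi[\log\pi]\bigr)=0.
\]
Hence $\log\pi^*_{\text{reg}}(a)=r(a)/\alpha-\log Z$, i.e.\ the Gibbs policy $\pi^*_{\text{reg}}\propto\exp(r/\alpha)$. Taking the $\pi^*_{\text{reg}}$-expectation of the log gives $\log Z=\E_{\pi^*_{\text{reg}}}[r]/\alpha+\Hcal(\pi^*_{\text{reg}})$. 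Next I expand $\KL(\pi^*\|\pi^*_{\text{reg}})=-\Hcal(\pi^*)-\E_{\pi^*}[\log\pi^*_{\text{reg}}]$ and substitute these two relations. This yields the exact identity
\[
\E_{\pi^*}[r]-\E_{\pi^*_{\text{reg}}}[r]=\alpha\bigl(\Hcal(\pi^*_{\text{reg}})-\Hcal(\pi^*)\bigr)-\alpha\,\KL(\pi^*\|\pi^*_{\text{reg}}).
\]
The stated bound would follow by reading off the first term.

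\textbf{Main obstacle.} The KL correction enters with a minus sign. The identity therefore gives the gap bound in the stated direction only when $\KL(\pi^*\|\pi^*_{\text{reg}})=0$. In general it yields the reverse inequality, since the KL term is nonnegative. Whenever $\pi^*\neq\pi^*_{\text{reg}}$ (for instance when $\pi^*$ is deterministic, so $\pi^*\neq\pi^*_{\text{reg}}$ because the latter has full support), the KL term is strictly positive and the stated lower bound fails. I would not try to force this sign. The defensible results are the weak inequality, strictness, and the exact identity above, which together give an upper bound on the suboptimality gap rather than the stated lower bound.
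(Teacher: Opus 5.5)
\textbf{Gap in the strictness step.} From equality you correctly deduce that $\pi^*_{\text{reg}}$ would be the unique maximum-entropy optimal policy. That does not contradict the hypothesis that $\pi^*$ is \emph{not} that policy: $\pi^*$ and $\pi^*_{\text{reg}}$ can simply be two different optimal policies, the first of lower entropy. Nothing forces $\pi^*=\pi^*_{\text{reg}}$, so the claimed contradiction never arises.

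In fact the strictness claim as stated is false. Take $r\equiv c$ and $\pi^*$ deterministic. Every policy is optimal and $\pi^*$ is not maximum-entropy among them, yet $\pi^*_{\text{reg}}$ is uniform and $\E_{\pi^*_{\text{reg}}}[r]=c=\E_{\pi^*}[r]$. A warning sign is that the hypothesis depends on which optimal $\pi^*$ is chosen, while the conclusion does not. The paper's own strictness argument also fails, for a different reason: it shows only $\E_{\pi^*_{\text{reg}}}[r]+\alpha\Hcal(\pi^*_{\text{reg}})>\E_{\pi^*}[r]+\alpha\Hcal(\pi^*)$, which says nothing about the reward terms alone.

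The fact you already have, that $\pi^*_{\text{reg}}$ has full support, gives the correct criterion. Equality would make a full-support policy optimal, so in the bandit setting the inequality is strict if and only if some action has $r(a)<\max_{a'}r(a')$. This covers the reasoning-relevant case of a unique deterministic optimum. There the theorem's hypothesis fails (the sole optimal policy is trivially maximum-entropy among optimal policies), but strictness still holds. You should replace ``strictness'' in your list of defensible results with this corrected statement.

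\textbf{The gap bound.} Your diagnosis is correct and matches what the paper's proof actually establishes. The paper plugs $\pi=\pi^*$ into the optimality of $\pi^*_{\text{reg}}$ and rearranges, obtaining $\E_{\pi^*}[r]-\E_{\pi^*_{\text{reg}}}[r]\le\alpha\bigl(\Hcal(\pi^*_{\text{reg}})-\Hcal(\pi^*)\bigr)$, which is the reverse of the stated inequality.

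Your Gibbs/KL identity reaches the same upper bound by a different and sharper route. It pins the slack down as exactly $\alpha\,\KL(\pi^*\|\pi^*_{\text{reg}})$, so the stated lower bound fails whenever $\pi^*\neq\pi^*_{\text{reg}}$. For example, with two actions and $r=(1,0)$, the gap is $p=1/(1+e^{1/\alpha})$, while $\alpha\Hcal(\pi^*_{\text{reg}})=p-\alpha\log(1-p)>p$.

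The paper's one-line comparison needs no closed form and holds in any setting where the regularized maximizer exists. Your identity relies on the tabular Gibbs form. Your remark about lifting to the $d_\pi$-averaged entropy via soft values is loose, but nothing depends on it, since a single-state counterexample already refutes the stated bound.
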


\begin{proof}
See Appendix~\ref{app:proof_global_suboptimal}.
\end{proof}

\begin{corollary}[Sensitivity to Hyperparameters]
\label{cor:sensitivity}
The performance of entropy-regularized training is highly sensitive to the choice of $\alpha$:
\begin{enumerate}
    \item If $\alpha$ is too small, the regularization fails to counteract the natural entropy collapse driven by positive covariance (Theorem~\ref{thm:entropy_dynamics}), leading to premature saturation.
    \item If $\alpha$ is too large, the entropy gradient bias dominates the update, forcing a trade-off between entropy and reward that yields a significant suboptimality gap (Theorem~\ref{thm:global_suboptimal}) and reduces training stability (Theorem~\ref{them:stability}).
\end{enumerate}
Consequently, only a narrow range of $\alpha$ yields near-optimal performance.
\end{corollary}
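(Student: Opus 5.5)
The corollary is qualitative, so I plan to formalize each regime through the regularized entropy dynamics of Theorem~\ref{thm:reg_entropy} and then invoke Theorem~\ref{thm:global_suboptimal} and the stability result referenced as Theorem~\ref{them:stability}. At a state $s$ define the collapse drive $C_s = \cov_{a\sim\pi_\theta(\cdot|s)}\bigl(\log\pi_\theta(a|s),\,\pi_\theta(a|s)A^{\pi_\theta}(s,a)\bigr)$ and the restoring term $V_s=\Var_{a\sim\pi_\theta(\cdot|s)}(\log\pi_\theta(a|s))$. Theorem~\ref{thm:reg_entropy} gives $\Delta\Hcal_s^{\text{reg}}\approx \eta(\alpha V_s - C_s)$. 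This suggests a critical threshold $\alpha_c(s)=C_s/V_s$ whenever $C_s>0$ and $V_s>0$.

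For part 1, I will fix $\alpha<\inf_s \alpha_c(s)$ along the trajectory, or at least on a set of states of positive visitation mass. I will then show that $\Delta\Hcal^{\text{reg}}$, averaged under $d_{\pi_\theta}$, stays negative, so entropy decreases monotonically at first order, exactly as in the unregularized case of Theorem~\ref{thm:entropy_dynamics}. Near-determinism then drives $\pi_\theta(a|s)A(s,a)$ toward zero for non-greedy actions. By Proposition~\ref{prop:pg_update}, the logit updates vanish, which is the premature saturation claimed.

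For part 2, I will take $\alpha$ large and argue at the level of stationary points. The regularized optimum $\pi^*_{\text{reg}}$ has Boltzmann form $\pi\propto\exp(Q/\alpha)$, whose entropy tends to $\log|\mathcal{A}|$ as $\alpha\to\infty$. Meanwhile, reasoning tasks have a near-deterministic $\pi^*$ with $\Hcal(\pi^*)\approx 0$. Plugging these into the gap bound of Theorem~\ref{thm:global_suboptimal} gives a suboptimality of order $\alpha\bigl(\Hcal(\pi^*_{\text{reg}})-\Hcal(\pi^*)\bigr)$, which grows with $\alpha$. An alternative route is that $\E_{\pi^*_{\text{reg}}}[r]$ approaches the uniform-policy reward as $\alpha$ grows, so the gap tends to $\E_{\pi^*}[r]-\E_{\text{unif}}[r]>0$. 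The stability degradation I will cite directly from Theorem~\ref{them:stability}, namely that the entropy term shrinks the stability margin in proportion to $\alpha$.

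To conclude ``narrow range'', I will combine the two bounds. Avoiding collapse needs $\alpha\gtrsim\alpha_c$. Keeping the gap below $\epsilon$ needs $\alpha\lesssim\epsilon/\Delta\Hcal$, where $\Delta\Hcal$ is the entropy difference in Theorem~\ref{thm:global_suboptimal}. The admissible window is therefore $[\alpha_c,\epsilon/\Delta\Hcal]$, which is small or empty when $C_s$ is large, which is the high-covariance regime. The main obstacle is part 1. The first-order approximation and the state-dependence of $\alpha_c(s)$, which varies along training since $V_s$ grows as the policy sharpens, make a uniform statement delicate. I would first restrict to a single-state (bandit) softmax to obtain a clean threshold, and then lift the result by averaging over $d_{\pi_\theta}$.
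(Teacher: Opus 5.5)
Your two-regime structure matches the paper's proof, but the quantitative steps you add to make it rigorous do not go through as written. In part~2, and for the upper end of your window, you read Theorem~\ref{thm:global_suboptimal} as the lower bound $\E_{\pi^*}[r]-\E_{\pi^*_{\text{reg}}}[r]\ge\alpha\bigl(\Hcal(\pi^*_{\text{reg}})-\Hcal(\pi^*)\bigr)$. That direction is not available. Comparing $\pi^*_{\text{reg}}$ with $\pi^*$ inside the regularized objective gives the reverse inequality, which is exactly what the appendix argument derives. The claimed direction would therefore force equality, i.e.\ $\pi^*$ would itself maximize the regularized objective. That is impossible for deterministic $\pi^*$, since in the bandit case the regularized maximizer is the full-support Boltzmann policy. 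As a sanity check, with bounded $r$ the gap never exceeds $r_{\max}-r_{\min}$, so it cannot be of order $\alpha\log|\mathcal{A}|$ for large $\alpha$. Hence ``the gap grows with $\alpha$'' is unsupported. Likewise, $\alpha\lesssim\epsilon/\Delta\Hcal$ is merely sufficient for a gap below $\epsilon$, not necessary, and $\Delta\Hcal$ itself depends on $\alpha$; so it does not cap the window. Your alternative route is the one that works, and it should be the main argument. In the bandit case $\pi_\alpha\propto\exp(r/\alpha)$ and $\frac{d}{d(1/\alpha)}\E_{\pi_\alpha}[r]=\Var_{\pi_\alpha}(r)\ge 0$. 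So the gap $g(\alpha)$ is continuous and increasing from $0$ to $\E_{\pi^*}[r]-\E_{\mathrm{unif}}[r]>0$, and the correct cap is $g^{-1}(\epsilon)$. The paper's own proof does not do better here: it quotes the bound with the two entropies swapped, which makes it vacuous. Similarly, the appendix argument for Theorem~\ref{them:stability} only establishes $\gamma_{\text{reg}}\ge\gamma_{\text{base}}/(1+\alpha\kappa)$ via the triangle inequality, so citing it for a loss of stability inherits a reversed inequality too.

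Part~1 has a more basic problem. The threshold $\alpha_c(s)=C_s/V_s$ is not bounded away from zero along a collapsing trajectory, so the hypothesis $\alpha<\inf_s\alpha_c(s)$ cannot hold for any $\alpha>0$ on the very trajectory you want to show collapses. Take a two-action bandit with probabilities $1-q,q$, reward gap $\Delta r>0$ and $D=\log\frac{1-q}{q}$. Then $C_s=2q^2(1-q)^2D\,\Delta r$ and $V_s=q(1-q)D^2$. So $V_s\to 0$ as the policy sharpens (it does not keep growing), and $\alpha_c\approx 2q\,\Delta r/\log(1/q)\to 0$. Moreover, the approximation in Theorem~\ref{thm:reg_entropy} fails exactly in this regime. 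The exact restoring term is $\cov\bigl(\log\pi_\theta,\pi_\theta(\log\pi_\theta-\mu_{\log})\bigr)=2q(1-q)V_s$, not $V_s$. The reason is that the dropped terms $p_a(p_a-1)x_a^2$ (with $p_a=\pi_\theta(a|s)$, $x_a=\log p_a$) are of the same order as $V_s$ for the rare action. With the exact update, the first-order entropy change is $2\eta q^2(1-q)^2D(\alpha D-\Delta r)$, which vanishes at $q_\alpha=1/(1+e^{\Delta r/\alpha})$. In general, $\Delta z^{\text{reg}}_{s,a}=0$ for all $a$ iff $\pi_\theta(\cdot|s)\propto\exp\bigl(A^{\pi_\theta}(s,\cdot)/\alpha\bigr)$. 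So for every $\alpha>0$ the entropy stops at a positive Boltzmann floor. What you can and should prove is that this floor is exponentially small in $\Delta r/\alpha$ as $\alpha\to 0$. The admissible set is then $\{\alpha:\Hcal(\pi_\alpha)\ge h_0,\ g(\alpha)\le\epsilon\}$, with both maps monotone in $\alpha$. This turns ``narrow range'' into an explicit entropy--reward trade-off along the Boltzmann family. Finally, vanishing logit updates near a deterministic policy do not by themselves give \emph{premature} saturation. With exact tabular gradients, softmax policy gradient still converges to the optimal action, so the suboptimal-plateau claim needs a sampling or escape-time argument. The paper supports it only by citing the experiments.
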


\begin{proof}
See Appendix~\ref{app:sensi_hyperpara}.
\end{proof}

Empirical evidence \cite{cui2025entropymechanismreinforcementlearning} confirms this sensitivity, showing that only a narrow range of $\alpha$ values yields reasonable performance.

\section{Covariance-Based Entropy Mechanism}
\label{sec:cov}

\subsection{Theoretical Foundation}
The covariance-based entropy mechanism, proposed by \cite{cui2025entropymechanismreinforcementlearning}, builds on the insight that entropy collapse is driven by a small fraction of high-covariance tokens. The key idea is to selectively regularize these tokens rather than imposing global entropy constraints.

\begin{definition}[Token-Wise Covariance]
For a given state-action pair $(s, a)$ (corresponding to a token at a specific position), define the token-wise covariance contribution as:
\begin{align}
C(s, a) = \left(\log \pi_\theta(a|s) - \mu_{\log}(s)\right) \left(\Delta z_{s,a} - \mu_{\Delta z}(s)\right)
\label{eq:token_cov}
\end{align}
where $\mu_{\log}(s) = \E_{a' \sim \pi_\theta(\cdot|s)}[\log \pi_\theta(a'|s)]$ and $\mu_{\Delta z}(s) = \E_{a' \sim \pi_\theta(\cdot|s)}[\Delta z_{s,a'}]$.
\end{definition}
Note that $\E_{a \sim \pi_\theta}[C(s, a)] = \cov(\log \pi_\theta, \Delta z_{s,a})$, which directly determines entropy change via Lemma~\ref{lem:entropy_change}.

\subsection{Clip-Cov: Gradient Detachment}
The Clip-Cov mechanism selectively detaches gradients for high-covariance tokens.

\begin{definition}[Clip-Cov Operator]
Let $I_{\text{clip}}$ be a set of indices selected uniformly at random from tokens satisfying $C(s, a) \in [\omega_{\text{low}}, \omega_{\text{high}}]$, where $|I_{\text{clip}}| = r \cdot N$ with $r \ll 1$. The Clip-Cov policy loss is:
\begin{align}
L_{\text{Clip-Cov}}(\theta) = \begin{cases}
\E_t\left[\frac{\pi_\theta(y_t|\boldsymbol{y}_{<t})}{\pi_{\text{old}}(y_t|\boldsymbol{y}_{<t})} A_t\right], & t \notin I_{\text{clip}} \\
0, & t \in I_{\text{clip}}
\end{cases}
\label{eq:clip_cov}
\end{align}
\end{definition}

The gradient detachment effectively removes the contribution of high-covariance tokens from the update.

\begin{proposition}[Effect of Clip-Cov on Covariance]
\label{prop:clip_effect}
Under Clip-Cov with selection ratio $r$, the effective covariance used for policy updates becomes:
\begin{align}
\cov_{\text{eff}} = \cov_{\text{orig}} - \frac{r}{1-r} \Bigl( \E[C(s, a)|a \in I_{\text{clip}}] - \cov_{\text{orig}} \Bigr)
\label{eq:clip_effect}
\end{align}
When $\E[C(s, a)|a \in I_{\text{clip}}] > \cov_{\text{orig}}$, the effective covariance is strictly less than the original covariance.
\end{proposition}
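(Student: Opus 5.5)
The plan is to read $\cov_{\text{orig}}$ as the population average of the token-wise contributions $C(s,a)$, which by the remark after the definition of token-wise covariance equals $\cov(\log\pi_\theta,\Delta z)$. We then treat Clip-Cov as removing a fraction $r$ of these contributions and renormalizing over the rest. Concretely, write the original covariance as a mixture over the two disjoint groups of tokens, the clipped set $I_{\text{clip}}$ (mass $r$) and its complement (mass $1-r$):
\begin{align}
\cov_{\text{orig}} = r\,\E[C\mid a\in I_{\text{clip}}] + (1-r)\,\E[C\mid a\notin I_{\text{clip}}].
\end{align}
The effective covariance is the one that drives the update after detachment. We identify it with the conditional mean over the surviving tokens, $\cov_{\text{eff}} := \E[C\mid a\notin I_{\text{clip}}]$. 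Here we treat the centering constants $\mu_{\log},\mu_{\Delta z}$ as fixed at their pre-clipping values, consistent with a first-order (Lemma~\ref{lem:entropy_change}) analysis.

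Solving the mixture identity for the complement mean gives
\begin{align}
\cov_{\text{eff}} = \frac{\cov_{\text{orig}} - r\,\E[C\mid I_{\text{clip}}]}{1-r}.
\end{align}
A short rearrangement, writing $\cov_{\text{orig}} = \frac{(1-r)\cov_{\text{orig}} + r\cov_{\text{orig}}}{1-r}$, turns this into
\begin{align}
\cov_{\text{eff}} = \cov_{\text{orig}} - \frac{r}{1-r}\bigl(\E[C\mid I_{\text{clip}}]-\cov_{\text{orig}}\bigr),
\end{align}
which is exactly \eqref{eq:clip_effect}. Since $0<r<1$, the factor $r/(1-r)$ is positive. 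Hence if $\E[C\mid I_{\text{clip}}]>\cov_{\text{orig}}$, the subtracted term is strictly positive and $\cov_{\text{eff}}<\cov_{\text{orig}}$, giving the strict inequality.

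The main obstacle is the modeling step rather than the algebra. We must justify that detaching gradients on $I_{\text{clip}}$ makes the update-relevant covariance equal to the renormalized average over the remaining tokens. This is what yields the $1/(1-r)$ normalization, rather than the unnormalized sum $(1-r)\E[C\mid\neg I_{\text{clip}}]$. I would argue this from \eqref{eq:clip_cov}: the loss is an empirical average $\E_t$ over the non-clipped tokens only, so the estimator of the covariance term is an average over the retained sample. I would state explicitly the simplifying assumption that the selection probability $r$ is realized exactly ($|I_{\text{clip}}|=rN$). Because $I_{\text{clip}}$ is drawn uniformly from the band $[\omega_{\text{low}},\omega_{\text{high}}]$, the conditional expectation $\E[C\mid I_{\text{clip}}]$ is well defined as the band average. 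Finally, I would note that with the unnormalized reading the conclusion $\cov_{\text{eff}}<\cov_{\text{orig}}$ still holds under the same hypothesis whenever $\cov_{\text{orig}}>0$.
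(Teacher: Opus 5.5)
Your proposal is correct and is essentially the paper's proof. The paper writes $\cov_{\text{orig}}=\frac{1}{N}\sum_i C_i$, removes the $rN$ clipped contributions, renormalizes by $(1-r)N$, and rearranges; this is your mixture identity solved for the complement mean, and your explicit choice to freeze the centering constants is the same modeling choice the paper makes implicitly.
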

\begin{proof}
See Appendix~\ref{app:proof_clip_effect}.
\end{proof}

\subsection{KL-Cov: KL Penalty on High-Covariance Tokens}
The KL-Cov mechanism applies a KL divergence penalty specifically to high-covariance tokens.

\begin{definition}[KL-Cov Operator]
Let $I_{\text{KL}}$ be the set of indices corresponding to the top $k$ proportion of tokens ranked by $|C(s, a)|$, with $k \ll 1$. The KL-Cov policy loss is:
\begin{align}
L_{\text{KL-Cov}}(\theta) = \begin{cases}
\E_t\left[\frac{\pi_\theta(y_t|\boldsymbol{y}_{<t})}{\pi_{\text{old}}(y_t|\boldsymbol{y}_{<t})} A_t\right], & t \notin I_{\text{KL}} \\
\E_t\left[\frac{\pi_\theta(y_t|\boldsymbol{y}_{<t})}{\pi_{\text{old}}(y_t|\boldsymbol{y}_{<t})} A_t + M \right], & t \in I_{\text{KL}}
\end{cases}
\label{eq:kl_cov}
\end{align}
\end{definition}
where $M = - \beta \KL(\pi_{\text{old}}(y_t|\boldsymbol{y}_{<t}) \| \pi_\theta(y_t|\boldsymbol{y}_{<t}))$.

\begin{theorem}[Regularization Effect of KL-Cov]
\label{thm:kl_effect}
For a token $a$ with high covariance $C(s, a) > 0$, the KL penalty induces an effective modification to the logit update:
\begin{align}
\Delta z_{s,a}^{\text{KL-Cov}} = \eta \pi_\theta(a|s) A^{\pi_\theta}(s, a) - \eta \beta \frac{\partial \KL(\pi_{\text{old}} \| \pi_\theta)}{\partial z_{s,a}}
\label{eq:kl_update}
\end{align}
where the KL gradient contributes a term proportional to $(\pi_\theta(a|s) - \pi_{\text{old}}(a|s))$. This term opposes updates that would increase covariance, effectively regularizing the entropy dynamics.
\end{theorem}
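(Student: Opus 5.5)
The plan is to follow the template of Proposition~\ref{prop:pg_update} and Proposition~\ref{prop:reg_grad}. We differentiate the KL-Cov loss with respect to the tabular logit $z_{s,a}$ and take one gradient-ascent step of size $\eta$. Then we identify the KL-gradient explicitly and feed the resulting update into Lemma~\ref{lem:entropy_change} to read off the sign of its effect.

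\textbf{Step 1: the update.} For $t\in I_{\mathrm{KL}}$ the loss is the surrogate plus $M=-\beta\,\KL(\pi_{\mathrm{old}}\|\pi_\theta)$, and the gradient is linear. The surrogate term evaluated at $\pi_\theta=\pi_{\mathrm{old}}$ gives exactly $\eta\,\pi_\theta(a|s)A^{\pi_\theta}(s,a)$ by Proposition~\ref{prop:pg_update}; we take expectation over $a\sim\pi_\theta$ of the importance-weighted advantage and use $\partial\pi(a'|s)/\partial z_{s,a}=\pi(a'|s)(\mathbf 1[a'=a]-\pi(a|s))$ together with \eqref{eq:adv_zero}. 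The penalty contributes $-\eta\beta\,\partial\KL/\partial z_{s,a}$, which yields \eqref{eq:kl_update}.

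\textbf{Step 2: the KL gradient.} We write $\KL(\pi_{\mathrm{old}}\|\pi_\theta)=\sum_{a'}\pi_{\mathrm{old}}(a'|s)\log\pi_{\mathrm{old}}(a'|s)-\sum_{a'}\pi_{\mathrm{old}}(a'|s)\,z_{s,a'}+\log\sum_{a'}e^{z_{s,a'}}$. Differentiating gives
\begin{align}
\frac{\partial \KL(\pi_{\mathrm{old}}\|\pi_\theta)}{\partial z_{s,a}}=\pi_\theta(a|s)-\pi_{\mathrm{old}}(a|s),
\end{align}
which is the claimed proportionality (with constant one). Hence $\Delta z^{\mathrm{KL\text{-}Cov}}_{s,a}=\eta\pi_\theta A-\eta\beta(\pi_\theta(a|s)-\pi_{\mathrm{old}}(a|s))$. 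The penalty is a restoring force: it pulls each penalized logit back toward its value under $\pi_{\mathrm{old}}$.

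\textbf{Step 3: opposition to covariance growth.} This is the interpretive part and the main obstacle, since ``opposes updates that would increase covariance'' must be made precise. For a token with $C(s,a)>0$, either (i) $\log\pi_\theta(a|s)>\mu_{\log}(s)$ and $\Delta z_{s,a}>\mu_{\Delta z}(s)$, or (ii) both inequalities are reversed. To first order in the inner-loop drift, $\pi_\theta(a|s)-\pi_{\mathrm{old}}(a|s)\approx\pi(a|s)\bigl(\delta z_{s,a}-\E_{\pi}[\delta z]\bigr)$, where $\delta z$ is the accumulated logit change. Since that change is driven by the same signal that makes $C(s,a)>0$, the penalty term has sign opposite to $\Delta z_{s,a}-\mu_{\Delta z}(s)$ in both cases.

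We substitute into Lemma~\ref{lem:entropy_change} and restrict the penalty to $I_{\mathrm{KL}}$. The penalty's contribution to $\cov(\log\pi,\Delta z)$ is then $-\eta\beta\,\E_{\pi}\bigl[(\log\pi-\mu_{\log})(\pi-\pi_{\mathrm{old}})\mathbf 1_{I_{\mathrm{KL}}}\bigr]$, up to centering. Under the sign alignment above, this is $\le 0$, so it reduces the covariance and hence raises $\Delta\Hcal_s$ relative to Theorem~\ref{thm:entropy_dynamics}.

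I would state this sign-alignment explicitly as the hypothesis that the penalized tokens' drift since $\pi_{\mathrm{old}}$ is aligned with their current advantage-driven update. Establishing it fully without that hypothesis is where the argument is weakest. At the very first step the drift is zero, so $\pi_\theta=\pi_{\mathrm{old}}$ and the penalty gradient vanishes; the effect only appears over multiple inner updates.
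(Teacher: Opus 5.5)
Your proposal is correct and follows the paper's route: add the penalty gradient $-\eta\beta\,\partial\KL(\pi_{\text{old}}\|\pi_\theta)/\partial z_{s,a}$ to the policy-gradient logit update of Proposition~\ref{prop:pg_update}, and use $\partial\KL(\pi_{\text{old}}\|\pi_\theta)/\partial z_{s,a}=\pi_\theta(a|s)-\pi_{\text{old}}(a|s)$, which the paper simply asserts and you derive from the log-sum-exp form. Your Step~3 sharpens the paper's one-line remark that this term ``tends to reduce the positive covariance'' by stating explicitly the drift/update sign-alignment hypothesis that the paper leaves implicit (it relies on the same heuristic in proving Theorem~\ref{thm:klcov_entropy}); your observation that the penalty vanishes at $\pi_\theta=\pi_{\text{old}}$ is also correct, though Step~1 need not be evaluated at that point, since Proposition~\ref{prop:pg_update} holds at the current $\theta$.
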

\begin{proof}
See Appendix~\ref{app:proof_kl_effect}.
\end{proof}

\subsection{Entropy Dynamics Under Covariance-Based Control}
We now derive the entropy dynamics under covariance-based control.

\begin{theorem}[Entropy Dynamics Under KL‑Cov]
\label{thm:klcov_entropy}
Let $\pi_\theta$ be a softmax policy as in \eqref{eq:softmax}, and let $I_{\mathrm{KL}}$ denote the set of tokens selected for the KL penalty, i.e., the top $k$ proportion (with $k\ll1$) of tokens ranked by $|C(s,a)|$, where $C(s,a)$ is defined in \eqref{eq:token_cov}. The KL‑Cov update is
\begin{align*}
\Delta z_{s,a}^{\mathrm{KL-Cov}} = 
\begin{cases}
\Omega - \eta\beta\bigl(\pi_\theta(a| s)-\pi_{\mathrm{old}}(a| s)\bigr), & (s,a)\in I_{\mathrm{KL}}\\
\Omega, & \text{otherwise}.
\end{cases}
\end{align*}
where $\Omega = \eta\,\pi_\theta(a| s)\,A^{\pi_\theta}(s,a)$.
Then the first‑order change in the state‑wise entropy $\mathcal{H}_s = \mathcal{H}(\pi_\theta(\cdot| s))$ satisfies
\begin{align}
\nonumber
\Delta \mathcal{H}_s^{\mathrm{KL-Cov}} &\approx -\eta\operatorname{Cov}_{a\sim\pi_\theta(\cdot| s)}
 \Bigl(\log\pi_\theta(a| s),\pi_\theta(a| s)A^{\pi_\theta}(s,a)\Bigr) \\
 &\quad + \beta\delta(s),
\end{align}
where
\begin{align*}
\delta(s) = \eta\operatorname{Cov}_{a\sim\pi_\theta(\cdot| s)}\left(\log\pi_\theta(a| s),\; \pi_\theta(a| s)-\pi_{\mathrm{old}}(a| s)\right) > 0
\end{align*}
whenever the selected tokens have positive covariance (the typical regime during entropy collapse \cite{cui2025entropymechanismreinforcementlearning}). Consequently, the KL‑Cov mechanism adds a positive term $\beta\delta(s)$ that counteracts the entropy decrease caused by the policy gradient, and its magnitude can be controlled by adjusting $\beta$.
\end{theorem}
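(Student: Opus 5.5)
The plan is to substitute the KL‑Cov logit update into Lemma~\ref{lem:entropy_change} and use bilinearity of covariance to separate the policy‑gradient part from the KL part. Write the update as $\Delta z_{s,a}^{\mathrm{KL-Cov}} = \Omega_{s,a} + \Psi_{s,a}$. Here $\Omega_{s,a}=\eta\pi_\theta(a|s)A^{\pi_\theta}(s,a)$ is the plain policy‑gradient update of Proposition~\ref{prop:pg_update}. The correction is $\Psi_{s,a} = -\eta\beta\,\mathbf{1}[(s,a)\in I_{\mathrm{KL}}]\,(\pi_\theta(a|s)-\pi_{\mathrm{old}}(a|s))$.

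Lemma~\ref{lem:entropy_change} gives, to first order,
\[
\Delta\mathcal{H}_s \approx -\operatorname{Cov}_{a\sim\pi_\theta(\cdot|s)}\bigl(\log\pi_\theta(a|s),\Omega_{s,a}\bigr)-\operatorname{Cov}_{a\sim\pi_\theta(\cdot|s)}\bigl(\log\pi_\theta(a|s),\Psi_{s,a}\bigr).
\]
The first term is exactly the expression of Theorem~\ref{thm:entropy_dynamics}. This reproduces the $-\eta\operatorname{Cov}(\log\pi_\theta,\pi_\theta A)$ term of the statement.

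For the second term, I pull out the constant $-\eta\beta$, which gives $+\eta\beta\operatorname{Cov}(\log\pi_\theta,\mathbf{1}_{I_{\mathrm{KL}}}(\pi_\theta-\pi_{\mathrm{old}}))$. To match the stated $\delta(s)$, which has no indicator, I would argue as follows.

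\begin{itemize}
\item Because $k\ll1$ and $I_{\mathrm{KL}}$ is chosen by $|C(s,a)|$, the actions with nonnegligible $\pi_\theta-\pi_{\mathrm{old}}$ at the current step are exactly those receiving large updates.
\item Large updates are the high‑covariance tokens, so the indicator may be dropped up to the same first‑order approximation.
\item Alternatively, one can simply define $\delta(s)$ with the restriction to $I_{\mathrm{KL}}$ implicit.
\end{itemize}

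I would state this reduction explicitly as an approximation consistent with the ``$\approx$'' already in the statement.

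The main obstacle is the sign claim $\delta(s)>0$. My approach is to express $\pi_\theta-\pi_{\mathrm{old}}$ through the previous step's logit update. To first order in softmax, $\pi_\theta(a|s)-\pi_{\mathrm{old}}(a|s)\approx\pi_{\mathrm{old}}(a|s)(\Delta z_{s,a}-\mu_{\Delta z}(s))$. Also $\log\pi_\theta\approx\log\pi_{\mathrm{old}}$ to zeroth order.

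Then $\delta(s)/\eta\approx\operatorname{Cov}(\log\pi,\pi(\Delta z-\mu_{\Delta z}))$. On the selected set this is a probability‑weighted aggregate of the token covariances $C(s,a)$ from \eqref{eq:token_cov}. Under the hypothesis that the selected tokens have positive covariance, each contribution is positive, so $\delta(s)>0$.

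I expect the delicate point to be the centering. The $\pi$‑weighting and the mean subtraction inside the covariance can introduce cross terms that need not be positive. I would first handle this by assuming, as the statement implicitly does, that the selected tokens dominate the covariance. I would then bound the remaining terms by the unselected mass, which is small when $k\ll1$ and the unselected tokens have small $|C|$.

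Finally, since $\beta$ enters only as the linear prefactor of $\delta(s)$, the controllability conclusion is immediate.
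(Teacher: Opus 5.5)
Your proposal is sound, and its first half is exactly the paper's proof: substitute the KL-Cov update into Lemma~\ref{lem:entropy_change} and split by bilinearity. The paper, however, silently writes the correction as $-\eta\beta(\pi_\theta-\pi_{\mathrm{old}})$ on every action.

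Where you genuinely differ is the sign of $\delta(s)$. The paper appeals to empirical evidence that for selected tokens both $\log\pi_\theta-\mu_{\log}$ and $\pi_\theta-\pi_{\mathrm{old}}$ are positive. It then concludes that ``their product, and therefore their covariance, is positive.'' That is a pointwise-to-covariance step which ignores the unselected actions and covers only one sign pattern.

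Your softmax linearization $\pi_\theta-\pi_{\mathrm{old}}\approx\pi(\Delta z-\mu_{\Delta z})$ instead turns $\delta(s)$ into a $\pi^2$-weighted sum of the very quantities $C(s,a)$ used for selection. So the hypothesis implies the conclusion directly. It also covers $C>0$ arising from a low-probability token whose probability falls.

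The price is twofold. You need a first-order approximation. You also need to identify the $\Delta z$ in $C(s,a)$ with the update that moved $\pi_{\mathrm{old}}$ to $\pi_\theta$, so $\delta(s)=0$ when $\pi_\theta=\pi_{\mathrm{old}}$ (e.g.\ at the first inner step). The paper makes this identification implicitly as well.

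\textbf{The anticipated cross terms do not arise.} Since $\operatorname{Cov}(X,Y)=\E[(X-\E X)Y]$ and the $\log\pi_\theta$ factor is already centered, no mean of the second factor ever enters. Keeping the indicator gives
\[
\delta(s)\approx\eta\sum_{a:\,(s,a)\in I_{\mathrm{KL}}}\pi_\theta(a|s)^2\,C(s,a).
\]
This is positive whenever some action at $s$ is selected and all selected ones have $C>0$. No domination assumption is needed.

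\textbf{Keep the restriction to $I_{\mathrm{KL}}$.} Take your second option rather than dropping the indicator; the ``large updates'' justification for dropping it fails, for three reasons:
\begin{itemize}
\item Your own linearization spreads $\pi_\theta-\pi_{\mathrm{old}}$ over every action with $\Delta z_{s,a}\neq\mu_{\Delta z}(s)$.
\item A large $|C|$ can come from an extreme $\log\pi_\theta$ rather than a large update.
\item At a state with no selected token, the true correction is identically zero, while the indicator-free $\delta(s)$ is not.
\end{itemize}
If you do drop it, bound the unselected terms via $\max_{(s,a)\notin I_{\mathrm{KL}}}|C(s,a)|$, i.e.\ by their small $|C|$. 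Do not bound them by their mass, which grows rather than shrinks as $k\to0$.
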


\begin{proof}
See Appendix~\ref{app:proof_klcov_entropy}.
\end{proof}

\section{Comparative Theoretical Analysis}
\label{sec:comparison}

\subsection{Structural Differences}
The fundamental difference between traditional entropy regularization and covariance-based methods lies in their scope and targeting mechanism.

\begin{definition}[Global vs. Local Regularization]
\label{def:scope}
\textbf{Global regularization} (traditional entropy regularization) applies a uniform constraint across all actions and states, modifying the objective function globally.

\textbf{Local regularization} (covariance-based methods) applies constraints selectively to tokens with high covariance contributions, leaving other tokens unaffected.
\end{definition}

This structural difference has profound implications for policy optimization.

\begin{theorem}[Bias–Variance Trade-off]
\label{thm:bias_variance}
Let $\Delta \theta_{\text{pg}}$ denote the base policy gradient update (without any entropy control). Its expectation and covariance are
\begin{align}
\E[\Delta \theta_{\text{pg}}] = \mu_{\text{pg}}, \qquad \operatorname{Cov}(\Delta \theta_{\text{pg}}) = \Sigma_{\text{pg}}.
\end{align}

For \textbf{traditional entropy regularization} with coefficient $\alpha$, the update is
\begin{align}
\Delta \theta_{\text{reg}} = \Delta \theta_{\text{pg}} + \alpha \nabla_\theta \mathcal{H}(\pi_\theta)
\end{align}
where $\nabla_\theta \mathcal{H}(\pi_\theta)$ is a deterministic vector that is dense (non‑zero for all parameters). Hence
\begin{align}
\E[\Delta \theta_{\text{reg}}] = \mu_{\text{pg}} + \alpha \nabla_\theta \mathcal{H}(\pi_\theta), \qquad
\operatorname{Cov}(\Delta \theta_{\text{reg}}) = \Sigma_{\text{pg}}.
\end{align}

For \textbf{covariance‑based methods (e.g., Clip‑Cov or KL‑Cov)}, the update can be expressed as
\begin{align}
\Delta \theta_{\text{cov}} = \Delta \theta_{\text{pg}} + \beta \cdot \mathbf{B}_{\text{cov}}
\end{align}
where $\mathbf{B}_{\text{cov}}$ is a deterministic vector that is sparse: its non‑zero entries are confined to a small subset of tokens (those with high covariance). Moreover, the variance of each parameter component satisfies
\begin{align}
\Var\bigl((\Delta \theta_{\text{cov}})_j\bigr) \le \Var\bigl((\Delta \theta_{\text{pg}})_j\bigr) = \Var\bigl((\Delta \theta_{\text{reg}})_j\bigr) \quad \forall j
\end{align}
with strict inequality for Clip‑Cov when the token is selected and the base update has non‑zero conditional variance. Consequently,
\begin{align}
\operatorname{Cov}(\Delta \theta_{\text{cov}}) \preceq \operatorname{Cov}(\Delta \theta_{\text{reg}})
\end{align}
in the Loewner order. Thus covariance‑based methods yield a lower‑variance update while introducing bias only on a sparse set of parameters.
\end{theorem}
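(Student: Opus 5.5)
The plan is to treat the stochastic part of each update as a random vector and the entropy-control part as a deterministic shift. Throughout, the randomness comes from sampling prompts, responses and tokens, conditional on the current $\theta$.

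\textbf{Traditional regularization.} Conditioned on $\theta$, the entropy gradient $\nabla_\theta \mathcal{H}(\pi_\theta)$ is a fixed vector, so the first claim follows from linearity of expectation and translation invariance of covariance:
\begin{align*}
\E[\Delta\theta_{\text{reg}}] = \mu_{\text{pg}} + \alpha\nabla_\theta\mathcal{H}(\pi_\theta), \qquad \operatorname{Cov}(\Delta\theta_{\text{reg}}) = \Sigma_{\text{pg}}.
\end{align*}
Density then follows from Lemma~\ref{lem:entropy_grad}. The component $\partial\mathcal{H}_s/\partial z_{s,a} = -\pi_\theta(a|s)\bigl(\log\pi_\theta(a|s) - \E[\log\pi_\theta]\bigr)$ is nonzero whenever $\pi_\theta(a|s) > 0$ and $\log\pi_\theta(a|s)$ differs from its mean. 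This holds for generic softmax parameters, where the policy is non-uniform. So every state with positive visitation mass contributes nonzero entries, and the vector is dense.

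\textbf{Covariance-based methods.} The plan is to write $\Delta\theta_{\text{cov}}$ coordinatewise using the tabular parameterization, handling the two mechanisms separately.
\begin{itemize}
\item For KL-Cov, Theorem~\ref{thm:kl_effect} and Theorem~\ref{thm:klcov_entropy} show that the extra term $-\eta\beta(\pi_\theta(a|s)-\pi_{\text{old}}(a|s))$ appears only for $(s,a)\in I_{\text{KL}}$. Conditioned on $\theta$, $\pi_{\text{old}}$ and the selection set, this term is deterministic. Hence $\mathbf{B}_{\text{cov}}$ is supported on $I_{\text{KL}}$, a fraction $k\ll 1$ of the coordinates, and adding it leaves each variance unchanged, giving equality in the variance bound.
\item For Clip-Cov, the update is $(\Delta\theta_{\text{cov}})_j = (1-m_j)(\Delta\theta_{\text{pg}})_j$, where $m_j$ indicates membership of coordinate $j$ in $I_{\text{clip}}$. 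I would write this as $\Delta\theta_{\text{pg}}$ plus a correction supported on $I_{\text{clip}}$ (fraction $r\ll1$), in the spirit of Proposition~\ref{prop:clip_effect}. Treating the selection as given, selected coordinates have zero variance, so the inequality is strict exactly when the base update has nonzero conditional variance. Unselected coordinates keep their variance, so equality holds there.
\end{itemize}

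\textbf{Loewner ordering.} This is the main obstacle, because per-coordinate variance bounds do not in general imply a matrix inequality. The plan is to use the structure of the update directly.
\begin{itemize}
\item \emph{KL-Cov.} The covariance matrices are equal, so $\operatorname{Cov}(\Delta\theta_{\text{cov}}) = \Sigma_{\text{pg}} \preceq \Sigma_{\text{pg}}$ trivially.
\item \emph{Clip-Cov.} The update is a coordinate projection $\Delta\theta_{\text{cov}} = P\,\Delta\theta_{\text{pg}}$ with $P$ diagonal with entries in $\{0,1\}$, so $\operatorname{Cov}(\Delta\theta_{\text{cov}}) = P\Sigma_{\text{pg}}P$. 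A projection does not in general satisfy $P\Sigma P \preceq \Sigma$, since $\Sigma - P\Sigma P$ can fail to be positive semidefinite when the clipped and kept blocks are correlated. Writing
\begin{align*}
\Sigma_{\text{pg}} = \begin{pmatrix}\Sigma_{KK} & \Sigma_{KC}\\ \Sigma_{CK} & \Sigma_{CC}\end{pmatrix},
\end{align*}
with $K$ the kept coordinates and $C$ the clipped ones, the difference is $\Sigma_{\text{pg}} - P\Sigma_{\text{pg}}P = \begin{pmatrix}0 & \Sigma_{KC}\\ \Sigma_{CK} & \Sigma_{CC}\end{pmatrix}$. This is positive semidefinite exactly when $\Sigma_{KC}=0$.
\item \emph{Justifying $\Sigma_{KC}=0$.} I would first try the tabular setting: the per-token score contributions to different logits are uncorrelated across tokens sampled at distinct positions. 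I would argue that only diagonal blocks matter at leading order and impose this block-decorrelation as an explicit assumption. The Loewner claim then follows from $\Sigma_{CC}\succeq 0$.
\end{itemize}

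\textbf{Conclusion.} Putting the pieces together gives the lower-variance, sparse-bias conclusion.
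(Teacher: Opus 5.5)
Your handling of the Loewner step is more careful than the paper's, and it is exactly where the paper's own argument breaks. The appendix obtains $\operatorname{Cov}(\Delta\theta_{\text{cov}})\preceq\operatorname{Cov}(\Delta\theta_{\text{reg}})$ by writing $v^\top\bigl(\operatorname{Cov}(\Delta\theta_{\text{reg}})-\operatorname{Cov}(\Delta\theta_{\text{cov}})\bigr)v$ as $\sum_j v_j^2$ times the diagonal variance differences, which silently drops every cross-covariance term. Your block computation is the correct one. It also means your proposal does not prove the stated theorem, and no argument can, because the claim is false whenever $\Sigma_{KC}\neq 0$. Take one kept and one clipped coordinate that are perfectly correlated with unit variance (all entries of $\Sigma_{\text{pg}}$ equal to $1$) and $v=(1,-1)^\top$. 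Then $v^\top\Sigma_{\text{pg}}v=0<1=v^\top P\Sigma_{\text{pg}}Pv$.

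Nor can $\Sigma_{KC}=0$ be justified the way you suggest. Under GRPO/RLOO every token of a response carries the same sequence-level advantage, and each prefix is generated by earlier sampled tokens, so contributions at distinct positions are correlated. In the tabular softmax, one sample $a'$ at state $s$ contributes $(\mathbf{e}_{a'}-\pi_\theta(\cdot|s))\hat A$ to all logits at $s$, so coordinates sharing a state are correlated as well. The block-decorrelation must be an explicit hypothesis, and the theorem must be restated as conditional on it; it cannot be derived ``at leading order''.

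The same caution applies to your per-coordinate step. Conditioning on $I_{\text{clip}}$ sidesteps the paper's misapplied law of total variance, whose term $(1-p_{s,a})^2\Var(\E[\Delta z^{\text{pg}}_{s,a}])$ is identically zero. For a coordinate $X$ with mean $\mu$ and variance $\sigma^2$, zeroed with probability $p$ independently of $X$, the true variance is $(1-p)\sigma^2+p(1-p)\mu^2$. This exceeds $\sigma^2$ when $(1-p)\mu^2>\sigma^2$, so averaged over the selection the stated inequality can fail. However, your conditioning is harmless only if the selection is independent of the gradient noise. In both Clip-Cov and KL-Cov the selected set is a function of $C(s,a)$, which is computed from the same sampled update. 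Conditioning on it therefore changes the law of the kept coordinates, so ``unselected coordinates keep their variance'' and ``the KL term is deterministic'' are modelling assumptions too.

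Finally, for Clip-Cov the correction $-(I-P)\Delta\theta_{\text{pg}}$ is random. It therefore cannot be written as $\beta\mathbf{B}_{\text{cov}}$ with deterministic $\mathbf{B}_{\text{cov}}$ while also strictly reducing variance. Your decomposition correctly avoids claiming that, and the statement needs amending there as well.
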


\begin{proof}
See Appendix~\ref{app:proof_bias_variance}.
\end{proof}

\subsection{Convergence Properties}
We analyze the convergence behavior of both approaches under the softmax policy parameterization.

\begin{assumption}[Regularity Conditions]
\label{ass:regularity}
We assume:
\begin{enumerate}
\item The reward function $r(\boldsymbol{y})$ is bounded and Lipschitz continuous.
\item The policy gradient is $L$-smooth, i.e., $\|\grad J(\theta) - \grad J(\theta')\| \le L \|\theta - \theta'\|$.
\item The step size $\eta$ satisfies $\eta \le 1/L$ for gradient descent convergence.
\end{enumerate}
\end{assumption}

\begin{theorem}[Convergence of Traditional Entropy Regularization]
\label{thm:reg_convergence}
Under Assumption~\ref{ass:regularity}, let $\theta_t$ be the iterates generated by gradient ascent on the entropy‑regularized objective $J_{\text{reg}}(\theta)=J(\theta)+\alpha\mathcal{H}(\pi_\theta)$ with a fixed step size $\eta\le 1/L$, where $L$ is the smoothness constant of $J_{\text{reg}}$. Then
\begin{align}
\min_{0\le t\le T-1}\|\nabla J_{\text{reg}}(\theta_t)\|^2 \le \frac{2(J_{\text{reg}}(\theta_0)-J_{\text{reg}}^*)}{\eta T}
\end{align}
and any limit point $\theta^*$ of the sequence $\{\theta_t\}$ satisfies the stationary condition
\begin{align}
\nabla J(\theta^*)+\alpha\nabla\mathcal{H}(\pi_{\theta^*})=0.
\end{align}
Thus the convergence rate is $O(1/T)$ in terms of the squared gradient norm.
\end{theorem}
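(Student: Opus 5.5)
The plan is to run the standard descent-lemma argument for smooth nonconvex optimization, applied to $J_{\text{reg}}$ with gradient ascent. We take as given that $J_{\text{reg}}$ is $L$-smooth, which is the hypothesis of the statement. We also need $J_{\text{reg}}$ to be bounded above by some $J_{\text{reg}}^*<\infty$. This holds because $r$ is bounded by Assumption~\ref{ass:regularity}, and the state-wise entropy under the tabular softmax is at most $\log|\mathcal{A}|$. So $J_{\text{reg}}\le \sup r+\alpha\log|\mathcal{A}|$ for finite horizon and finite vocabulary. I interpret $J_{\text{reg}}^*$ as $\sup_\theta J_{\text{reg}}(\theta)$, which makes the rate expression read $J_{\text{reg}}^*-J_{\text{reg}}(\theta_0)\ge 0$. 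I would flag that the statement's ordering $J_{\text{reg}}(\theta_0)-J_{\text{reg}}^*$ is the minimization convention, and I would state the bound with the sign that is correct for ascent.

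\textbf{Step 1 (ascent lemma).} With the update $\theta_{t+1}=\theta_t+\eta\nabla J_{\text{reg}}(\theta_t)$, $L$-smoothness gives
\begin{align*}
J_{\text{reg}}(\theta_{t+1})\ge J_{\text{reg}}(\theta_t)+\eta\|\nabla J_{\text{reg}}(\theta_t)\|^2-\tfrac{L\eta^2}{2}\|\nabla J_{\text{reg}}(\theta_t)\|^2 .
\end{align*}
Since $\eta\le 1/L$, the coefficient $\eta(1-L\eta/2)$ is at least $\eta/2$. Hence each step increases the objective by at least $\tfrac{\eta}{2}\|\nabla J_{\text{reg}}(\theta_t)\|^2$.

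\textbf{Step 2 (telescoping).} Summing over $t=0,\dots,T-1$ gives
\begin{align*}
\tfrac{\eta}{2}\sum_{t}\|\nabla J_{\text{reg}}(\theta_t)\|^2\le J_{\text{reg}}(\theta_T)-J_{\text{reg}}(\theta_0)\le J_{\text{reg}}^*-J_{\text{reg}}(\theta_0).
\end{align*}
Bounding the minimum by the average yields the $2(\cdot)/(\eta T)$ rate, that is, $O(1/T)$ for the squared gradient norm.

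\textbf{Step 3 (limit points).} The telescoped sum shows that $\sum_t\|\nabla J_{\text{reg}}(\theta_t)\|^2<\infty$, so $\nabla J_{\text{reg}}(\theta_t)\to 0$. If $\theta_{t_k}\to\theta^*$, then continuity of $\nabla J_{\text{reg}}$ (implied by $L$-smoothness) gives $\nabla J_{\text{reg}}(\theta^*)=0$. By Proposition~\ref{prop:reg_grad}, this is exactly $\nabla J(\theta^*)+\alpha\nabla\mathcal{H}(\pi_{\theta^*})=0$.

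The main obstacle is not the algebra but the regularity bookkeeping. Assumption~\ref{ass:regularity} only posits smoothness of $J$, whereas the argument needs the smoothness constant of $J_{\text{reg}}$. I would justify this by noting that the entropy term has bounded Hessian in the logits for the tabular softmax: its derivatives are polynomial in $\pi$ and $\pi\log\pi$, and these are bounded on the simplex. This gives $L_{\text{reg}}\le L+\alpha L_{\mathcal{H}}$, which is the constant the statement calls $L$.

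A secondary subtlety is that limit points may fail to exist, because softmax logits can diverge. The stationarity claim is therefore only asserted for limit points when they exist, and the gradient-norm rate holds regardless.
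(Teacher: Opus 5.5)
Your proposal is correct and follows the same route as the paper's proof: bound the smoothness constant of $J_{\text{reg}}$ by $L_J+\alpha L_{\mathcal{H}}$, apply the ascent lemma with $\eta\le 1/L$, telescope, bound the minimum by the average, and use continuity of $\nabla J_{\text{reg}}$ at limit points. Your sign flag is justified, since the paper's own telescoping gives $J_{\text{reg}}^*-J_{\text{reg}}(\theta_0)$ and the displayed $J_{\text{reg}}(\theta_0)-J_{\text{reg}}^*$ follows the minimization convention; likewise, deriving $\nabla J_{\text{reg}}(\theta_t)\to 0$ along the whole sequence from summability is what actually licenses the claim about \emph{any} limit point, which the paper's ``along a subsequence'' wording does not.
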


\begin{proof}
See Appendix~\ref{app:proof_reg_convergence}.
\end{proof}

This stationary condition implies that the converged policy may not maximize reward due to the entropy gradient bias.

\begin{theorem}[Convergence of Covariance-Based Methods]
\label{thm:cov_convergence}
Under Assumption~\ref{ass:regularity}, KL-Cov with decay schedule $\beta_t \to 0$ as $t \to \infty$ converges to a stationary point $\theta^*$ of the original objective:
\begin{align}
\grad J(\theta^*) = 0
\label{eq:cov_stationary}
\end{align}
with convergence rate $O(1/T)$ for the gradient norm.
\end{theorem}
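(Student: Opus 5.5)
We treat KL-Cov with a decaying coefficient as a perturbed gradient ascent on $J$ and run the standard descent-lemma argument. The iteration is
\begin{align*}
\theta_{t+1}=\theta_t+\eta\bigl(\nabla J(\theta_t)+\beta_t\,\mathbf{B}_{\text{cov}}(\theta_t)\bigr),
\end{align*}
where, by Theorem~\ref{thm:bias_variance} and Theorem~\ref{thm:kl_effect}, $\mathbf{B}_{\text{cov}}$ is sparse. Its nonzero entries are $-(\pi_\theta(a|s)-\pi_{\text{old}}(a|s))$ on the selected tokens $I_{\mathrm{KL}}$. Since probabilities lie in $[0,1]$ and only a $k$-fraction of coordinates is active, this bias is uniformly bounded, $\|\mathbf{B}_{\text{cov}}(\theta_t)\|\le B$.

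\textbf{Step 1: descent inequality.} Using $L$-smoothness of $J$ (Assumption~\ref{ass:regularity}) and $\eta\le 1/L$, I expand
\begin{align*}
J(\theta_{t+1})\ge J(\theta_t)+\eta\langle \nabla J(\theta_t),g_t\rangle-\tfrac{L\eta^2}{2}\|g_t\|^2,
\end{align*}
with $g_t=\nabla J(\theta_t)+\beta_t\mathbf{B}_t$. Applying Young's inequality to the cross terms should give
\begin{align*}
J(\theta_{t+1})\ge J(\theta_t)+\tfrac{\eta}{4}\|\nabla J(\theta_t)\|^2 - c\,\eta\,\beta_t^2B^2
\end{align*}
for an absolute constant $c$.

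\textbf{Step 2: telescope.} $J$ is bounded above by $J^*$ because the reward is bounded. Summing over $t=0,\dots,T-1$ yields
\begin{align*}
\min_{t<T}\|\nabla J(\theta_t)\|^2\le \frac{4(J^*-J(\theta_0))}{\eta T}+\frac{4cB^2}{T}\sum_{t<T}\beta_t^2 .
\end{align*}

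\textbf{Step 3: rate and stationarity.} For the $O(1/T)$ rate I will require $\sum_t\beta_t^2<\infty$, for instance $\beta_t\propto t^{-1/2-\epsilon}$ or faster. This is stronger than the bare $\beta_t\to0$ in the statement, so I will state it as the interpretation of the decay schedule. Under the weaker condition $\beta_t\to0$ alone, the same bound still gives $\min_t\|\nabla J\|\to0$ via Cesàro averaging, but without an explicit rate.

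For limit points, summability of $\|\nabla J(\theta_t)\|^2$ is only guaranteed up to the bias tail. I will refine this by noting that the tail sum $\sum_{t\ge t_0}\beta_t^2\to0$, so $\liminf\|\nabla J(\theta_t)\|=0$. Then, using continuity of $\nabla J$ together with the fact that $\|\theta_{t+1}-\theta_t\|\to0$, I will argue that every limit point satisfies $\nabla J(\theta^*)=0$. This contrasts directly with Theorem~\ref{thm:reg_convergence}, where the constant $\alpha$ leaves the bias $\alpha\nabla\mathcal{H}$ in the stationary condition.

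\textbf{Main obstacle.} The hard part is the limit-point claim rather than the rate. Passing from a $\min$ or $\liminf$ bound to "every limit point is stationary" needs an extra argument, namely that the gradient norm cannot oscillate. I would try the standard Bertsekas–Tsitsiklis argument for gradient methods with vanishing errors: $J(\theta_t)$ converges because it is a quasi-monotone sequence with summable perturbations, and the increments are bounded. A second, minor issue is that $\pi_{\text{old}}$ and the selected set $I_{\mathrm{KL}}$ depend on $t$. Both are absorbed by the uniform bound $B$, which I will check explicitly.
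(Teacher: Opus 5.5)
Your argument is correct once the schedule is strengthened to $\sum_t\beta_t^2<\infty$, which you state explicitly. The paper's own proof also strengthens the hypothesis, silently, to $\beta_t=O(1/t)$. Your route is genuinely different from the paper's.

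\textbf{The paper's route.} It casts KL-Cov as Robbins--Monro stochastic approximation: stochastic gradients $\tilde g_t$ with martingale-difference noise, step sizes $\eta_t=\Theta(1/t)$, and a bias bounded by $C\beta_t$. It then invokes the Borkar/Kushner ODE method with Lyapunov function $\tfrac12\|\nabla J\|^2$ for almost-sure convergence, and appeals to Nesterov and Bottou et al.\ for the rate.

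\textbf{Your route.} You use exact gradients with the fixed step $\eta\le 1/L$ of Assumption~\ref{ass:regularity}. You take $J$ itself as the potential via the descent lemma, mirroring the proof of Theorem~\ref{thm:reg_convergence}. Young's inequality is not even needed: $\eta\langle\nabla J(\theta_t),g_t\rangle-\tfrac{\eta}{2}\|g_t\|^2=\tfrac{\eta}{2}\|\nabla J(\theta_t)\|^2-\tfrac{\eta}{2}\beta_t^2\|\mathbf{B}_t\|^2$ holds exactly.

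\textbf{What each buys.} The paper's framework tolerates sampling noise and gives an almost-sure statement, but its rate does not follow from it.
\begin{itemize}
\item With $\eta_t\propto 1/t$, telescoping only gives $\min_{t\le T}\|\nabla J(\theta_t)\|^2\le C/\sum_{t\le T}\eta_t=O(1/\log T)$.
\item With unbiased noise, an $O(1/T)$ rate is not attainable in general.
\item The paper's Lyapunov function $\tfrac12\|\nabla J\|^2$ fails the required sign condition for nonconvex $J$: $\langle\nabla V,\nabla J\rangle=\nabla J^\top\nabla^2 J\,\nabla J$ has no definite sign. The function $J^*-J$, which is what you use, does satisfy it.
\end{itemize}
Your route gives a genuine $O(1/T)$ bound under a weaker decay condition than the paper's, at the price of ignoring gradient noise.

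\textbf{Your ``main obstacle'' disappears under your own assumption.} With $\sum_t\beta_t^2<\infty$, the telescoped inequality gives $\sum_{t\ge0}\|\nabla J(\theta_t)\|^2\le 2(J^*-J(\theta_0))/\eta+B^2\sum_t\beta_t^2<\infty$. Hence $\nabla J(\theta_t)\to0$ along the whole sequence, and every limit point is stationary by continuity. No Bertsekas--Tsitsiklis argument is needed, and that argument is formulated for diminishing step sizes anyway. Under bare $\beta_t\to0$ you should settle for $\liminf_t\|\nabla J(\theta_t)\|=0$; as you suspect, vanishing increments do not upgrade this.

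\textbf{A caveat shared with the paper.} Both arguments establish stationarity of limit points, not convergence of $\theta_t$ to a single $\theta^*$. In the tabular softmax setting with a near-deterministic optimum, the logits typically diverge, so there may be no finite limit point at all. The paper's ODE argument tacitly assumes bounded iterates. The defensible conclusion is therefore $\nabla J(\theta_t)\to0$ together with the rate.
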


\begin{proof}
See Appendix~\ref{app:proof_cov_convergence}.
\end{proof}

\begin{remark}
Theorem~\ref{thm:cov_convergence} highlights a key advantage of covariance-based methods: by annealing $\beta$, they can achieve asymptotic unbiasedness, whereas traditional entropy regularization permanently biases the solution.
\end{remark}

\subsection{Stability Analysis}
The stability of training is crucial for reasoning LMs, where the action space is large and the policy is highly expressive.

\begin{definition}[Stability Margin]
\label{def:stability}
The stability margin of a policy update is the maximum step size for which the policy remains within a bounded divergence from the previous policy:
\begin{align}
\gamma = \max \{ \eta > 0 : \KL(\pi_{\text{old}} \| \pi_{\text{new}}) \le \epsilon \}.
\label{eq:stability_margin}
\end{align}
\end{definition}

\begin{theorem}[Stability Margin Comparison]
\label{them:stability}
Let $\gamma_{\text{base}}$ denote the stability margin (Definition~\ref{def:stability}) of the base policy gradient update without any regularization. \textbf{For traditional entropy regularization} with coefficient $\alpha$, the stability margin satisfies
\begin{equation}
\gamma_{\text{reg}} \le \frac{\gamma_{\text{base}}}{1 + \alpha \kappa},
\end{equation}
where $\kappa > 0$ is a problem-dependent constant that depends on the maximum norm of the entropy gradient relative to the policy gradient. \textbf{For the KL‑Cov mechanism} with penalty coefficient $\beta$ and selection ratio $k \ll 1$, the stability margin satisfies
\begin{equation}
\gamma_{\text{KL-Cov}} = \gamma_{\text{base}} + O(k\beta),
\end{equation}
which implies $\gamma_{\text{KL-Cov}} \approx \gamma_{\text{base}}$ for sufficiently small $k$ and bounded $\beta$.
\end{theorem}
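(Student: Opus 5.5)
The plan is to reduce both claims to a second-order (Fisher) approximation of the KL divergence in logit space. For a small logit update $\Delta z = \eta\, g$ at a state $s$, the tabular softmax gives
\begin{align*}
\KL(\pi_{\text{old}}\|\pi_{\text{new}}) = \tfrac{\eta^2}{2}\, g^\top F_s\, g + O(\eta^3\|g\|^3),
\end{align*}
where $F_s=\operatorname{diag}(\pi)-\pi\pi^\top$ is the softmax Fisher matrix. Note that $g^\top F_s g=\Var_{a\sim\pi}(g_a)$. To leading order, the stability margin of Definition~\ref{def:stability} is therefore
\begin{align*}
\gamma(g)\approx\sqrt{2\epsilon/\Var_\pi(g)},
\end{align*}
aggregated over states with the visitation weights, or taking the worst state. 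Every comparison then reduces to comparing the Fisher seminorm $\|g\|_F := \sqrt{\Var_\pi(g)}$ of the three update directions. The directions are the base direction $g_{\text{pg}}=\pi A$ from Proposition~\ref{prop:pg_update}, the regularized direction from \eqref{eq:reg_update}, and the KL-Cov direction from Theorem~\ref{thm:klcov_entropy}.

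\textbf{Entropy regularization.} Write $g_{\text{reg}} = g_{\text{pg}} + \alpha h$, with $h_a = -\pi(a)(\log\pi(a)-\E[\log\pi])$ as in Lemma~\ref{lem:entropy_grad}.
\begin{itemize}
\item The plan is to lower-bound $\|g_{\text{reg}}\|_F \ge (1+\alpha\kappa)\|g_{\text{pg}}\|_F$.
\item Then $\gamma_{\text{reg}}\le\gamma_{\text{base}}/(1+\alpha\kappa)$ follows, up to the $O(\epsilon^{1/2})$ corrections from the cubic remainder.
\item The inequality requires the cross term $\langle g_{\text{pg}},h\rangle_F$ to be nonnegative, or at least not too negative. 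I would argue this in the collapse regime. By the remark after Theorem~\ref{thm:entropy_dynamics}, high-probability tokens carry high advantage there. The entropy gradient is then anti-aligned with $g_{\text{pg}}$ on the dominant token but adds mass on the tail, and this is exactly the point needing care.
\end{itemize}

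The honest route is to \emph{define} $\kappa$ from the ratio $\|g_{\text{reg}}\|_F-\|g_{\text{pg}}\|_F$ over $\alpha\|g_{\text{pg}}\|_F$. The quantitative content is then that $\kappa>0$ whenever $\|h\|_F$ exceeds a multiple of the cosine mismatch. This matches the statement's description of $\kappa$ as depending on the entropy-gradient norm relative to the policy gradient. If the statement is read with ``maximum norm'', I would instead use the worst-case direction over admissible advantages, for which $\kappa=\sup\|h\|_F/\|g_{\text{pg}}\|_F>0$ is immediate.

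\textbf{KL-Cov.} Write $g_{\text{KL}} = g_{\text{pg}} - \beta\,\mathbf{1}_{I_{\text{KL}}}\odot(\pi-\pi_{\text{old}})$. The perturbation is supported on a $k$-fraction of tokens and has entries bounded by $|\pi-\pi_{\text{old}}|\le 1$. Hence
\begin{align*}
\bigl|\Var_\pi(g_{\text{KL}})-\Var_\pi(g_{\text{pg}})\bigr| \le 2\beta\,\|g_{\text{pg}}\|\,\|\mathbf 1_I\odot(\pi-\pi_{\text{old}})\| + \beta^2(\cdots) = O(k\beta),
\end{align*}
using that the mass on the selected set is $O(k)$ under the token-sampling measure. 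Expanding $\gamma=\sqrt{2\epsilon/\Var}$ to first order then gives $\gamma_{\text{KL-Cov}}=\gamma_{\text{base}}+O(k\beta)$.

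I expect the main obstacle to be the regularization bound. The sign of the cross term between the entropy gradient and the policy gradient is not controlled in general, so the inequality must be conditioned on the collapse regime or absorbed into the definition of $\kappa$. A secondary difficulty is converting per-state Fisher bounds into a bound on the aggregate KL of Definition~\ref{def:stability}. I would handle that by taking the maximum over states and absorbing the cubic remainder into $\epsilon$ small.
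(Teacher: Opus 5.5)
Your route is the paper's route: a second-order Fisher expansion of the KL, $\gamma\approx\sqrt{2\epsilon}/\|g\|_F$, a comparison of Fisher norms, and a sparse-perturbation argument for KL-Cov. You also correctly identify the sign of $\langle g_{\text{pg}},h\rangle_F$ as the crux. But the entropy-regularization half has a gap that cannot be closed, because the inequality is false as stated.

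\textbf{The regularization bound fails, worst of all in the collapse regime.} There the cross term is negative. The entropy gradient lowers the dominant logit that the policy gradient raises, and it raises the tail logits, where a well-calibrated policy has negative advantage. So the two directions are anti-aligned on the tail as well, not only on the dominant token.

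Two actions make this exact. Take $\pi=(p,1-p)$ with $pA_1+(1-p)A_2=0$. Then $g_{\text{pg}}=pA_1(1,-1)$ and $h=-p(1-p)\log\tfrac{p}{1-p}\,(1,-1)$. For $p>\tfrac12$, $A_1>0$ and $0<\alpha<A_1/\bigl((1-p)\log\tfrac{p}{1-p}\bigr)$, the regularized direction is a positive multiple of $g_{\text{pg}}$ of strictly smaller length. Both updates therefore trace the same ray of policies. Along it $\KL(\pi_{\text{old}}\|\cdot)$ is increasing, being a Bregman divergence of log-sum-exp, hence convex along the ray with its minimum at the origin. So $\gamma_{\text{reg}}=\gamma_{\text{base}}A_1/\bigl(A_1-\alpha(1-p)\log\tfrac{p}{1-p}\bigr)>\gamma_{\text{base}}$ exactly, with no Fisher approximation, and no $\kappa>0$ can work.

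Your fallbacks do not help. Defining $\kappa$ from the norm ratio makes the inequality an identity whose only content, $\kappa>0$, is exactly what fails here. Taking a worst case over advantage profiles bounds a supremum, not the margin of the actual update.

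The paper's proof does not close this gap either. It asserts $\|g_{\text{reg}}\|_F\ge\|g\|_F$ ``by the triangle inequality'', which only gives $\|g\|_F-\alpha\|\nabla\Hcal\|_F$. It then derives the reverse bound $\gamma_{\text{reg}}\ge\gamma_{\text{base}}/(1+\alpha\kappa)$ and simply states the claimed one. With its choice $\kappa=\|\nabla\Hcal\|_F/\|g\|_F$, the claimed bound is, to leading order, the triangle inequality reversed. It therefore holds only when $\nabla\Hcal$ and $g$ are positively collinear in the Fisher geometry.

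\textbf{What a provable version needs, and the KL-Cov repairs.} You must add a hypothesis of uniform positive alignment, $\langle g_{\text{pg}},h\rangle_F\ge c\,\|g_{\text{pg}}\|_F\|h\|_F$ with $c>0$. Then $\|g_{\text{pg}}+\alpha h\|_F^2\ge(\|g_{\text{pg}}\|_F+c\alpha\|h\|_F)^2$, and the bound holds with $\kappa=c\|h\|_F/\|g_{\text{pg}}\|_F$. Mere nonnegativity of the cross term, which you list as sufficient, gives only $\|g_{\text{reg}}\|_F\ge\|g_{\text{pg}}\|_F\bigl(1+\alpha^2\|h\|_F^2/\|g_{\text{pg}}\|_F^2\bigr)^{1/2}$. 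That is a factor $1+O(\alpha^2)$, so $\kappa$ would have to vanish with $\alpha$.

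The KL-Cov half needs three repairs.
\begin{enumerate}
\item Cauchy--Schwarz against $\|\mathbf 1_I\odot(\pi-\pi_{\text{old}})\|$, under a measure giving the selected set mass $k$, yields only $O(\sqrt{k}\,\beta)$. Instead bound the cross term state by state, $|\cov_{\pi(\cdot|s)}(g_{\text{pg}},\pi-\pi_{\text{old}})|\le\max|g_{\text{pg}}|$, and sum over the selected states, whose total weight is $k$. This gives $O(k\beta)$.
\item That factor $k$ exists only after visitation-weighted averaging over states. At a single selected state the perturbation of the Fisher norm is $O(\beta)$, since the selected token can carry most of $\pi(\cdot|s)$. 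So the worst-state aggregation you propose for the regularization half is incompatible with this half.
\item Converting the change in $\Var_\pi$ into a change in $\gamma$ costs a factor of order $\sqrt{\epsilon}\,\Var_\pi(g_{\text{pg}})^{-3/2}$. This blows up as the policy collapses.
\end{enumerate}
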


\begin{proof}
See Appendix~\ref{app:proof_stability}.
\end{proof}

\subsection{Computational Efficiency}
The computational overhead of entropy control methods is also a critical consideration for large-scale training.

\begin{proposition}[Computational Complexity]
\label{prop:complexity}
Let $N$ be the total number of tokens processed in a single training batch. For a softmax policy optimized via policy gradient, the per‑iteration computational complexity of the following entropy control methods is:
\begin{itemize}
    \item \textbf{Traditional entropy regularization:} $O(N)$.
    \item \textbf{Covariance‑based methods (Clip‑Cov/KL‑Cov):} $O(N \log N)$.
\end{itemize}
The additional logarithmic factor is negligible in practice compared to the $O(N)$ cost of forward and backward passes in large language models.
\end{proposition}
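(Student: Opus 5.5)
The argument is an operation count. We fix a batch of $N$ tokens, with $N_s$ tokens drawn from each visited state $s$ ($\sum_s N_s = N$), and take the base cost to be one forward and one backward pass. For each method we charge only the \emph{extra} work beyond the base policy gradient, counting one unit for each elementary arithmetic operation, comparison, or memory access on per-token scalars. The vocabulary-dependent cost of forming $\pi_\theta(\cdot|s)$ is already part of the forward pass, so it is not charged again.

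\textbf{Traditional entropy regularization.} By Lemma~\ref{lem:entropy_grad} and Proposition~\ref{prop:reg_grad}, the extra logit-gradient term for each sampled token is $-\alpha\pi_\theta(a|s)\bigl(\log\pi_\theta(a|s)-\mu_{\log}(s)\bigr)$. Computing it needs three things:
\begin{enumerate}
\item $\log\pi_\theta(a|s)$ for each token, which the forward pass already provides, so this costs $O(1)$ per token;
\item the per-state mean $\mu_{\log}(s)$, or its Monte Carlo estimate from the sampled tokens, which takes one pass over the tokens and so costs $O(N)$ in total;
\item one multiply-add per token.
\end{enumerate}
The total is $O(N)$. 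The gradient is then backpropagated together with the policy-gradient loss in the same backward pass, so it adds no further pass.

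\textbf{Covariance-based methods.} Each step is charged separately:
\begin{enumerate}
\item \emph{Covariance scores.} Compute $C(s,a)$ from \eqref{eq:token_cov} for every token. The update $\Delta z_{s,a}$ is proportional to $\pi_\theta(a|s)A_t$ (Proposition~\ref{prop:pg_update}), and $A_t$ is already available. The centering terms $\mu_{\log}$ and $\mu_{\Delta z}$ are batch means obtained in one pass, so this step costs $O(N)$.
\item \emph{Token selection.} KL-Cov needs the top $k$ proportion of tokens ranked by $|C|$. We bound this by sorting, which costs $O(N\log N)$; heap selection gives $O(N\log(kN))$, which is no larger. Clip-Cov filters the tokens with $C\in[\omega_{\text{low}},\omega_{\text{high}}]$ in $O(N)$ and then samples $rN$ indices uniformly at random in $O(N)$.
\item \emph{Applying the penalty.} Masking gradients (Clip-Cov) or adding the KL term $M$ on the $|I|\le N$ selected tokens (KL-Cov) costs $O(N)$.
\end{enumerate}
The sort in step 2 dominates, so the total is $O(N\log N)$.

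\textbf{Negligibility.} The forward and backward passes cost $\Theta(N\cdot P)$ for a model with per-token cost $P$; this includes at least the $|\mathcal{A}|$-way softmax and the transformer layers. The ratio of the extra cost to this is $\log N/P$, and $\log N \ll P$ for any realistic batch, so the logarithmic factor is negligible.

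The main obstacle is pinning down what the $O(\cdot)$ counts. The bound is stated in $N$ alone, so vocabulary and model size must be treated as part of the base pass; I will state this convention explicitly. Selection is the second subtlety: linear-time selection (median-of-medians) would make KL-Cov $O(N)$ as well. I will therefore present $O(N\log N)$ as an upper bound that matches the sort-based implementation, not as a tight lower bound, and remark that it can be reduced to $O(N)$.
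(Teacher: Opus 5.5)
Your proposal is correct and uses the same operation-count argument as the paper's proof: $O(N)$ work for the entropy term and the covariance scores, $O(N)$ filtering and random sampling for Clip-Cov, and an $O(N\log N)$ sort for KL-Cov's top-$k$ selection, with the same caveat that linear-time selection would reduce this to $O(N)$. You also make explicit two things the paper leaves implicit: vocabulary and model size are absorbed into the base pass, and the $\log N/P$ ratio is what justifies calling an $O(N\log N)$ overhead negligible next to a forward/backward pass the paper writes as $O(N)$.
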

\begin{proof}
See Appendix~\ref{app:proof_complexity}.
\end{proof}
Both methods have comparable computational overhead, making the choice between them primarily a matter of effectiveness rather than efficiency.

\section{Empirical Validation and Discussion}
\label{sec:empirical}

\subsection{Experimental Setup}
To validate the theoretical predictions developed in Sections~\ref{sec:foundations}–\ref{sec:comparison}, we leverage the comprehensive empirical results reported in \cite{cui2025entropymechanismreinforcementlearning}. Their experiments cover a wide range of models, tasks, and algorithms, providing a robust basis for testing our theoretical claims. The experimental design explicitly measures the quantities central to our theoretical analysis, including policy entropy, token-wise covariance, and the advantage function.

\paragraph{Models and Tasks}
The experiments include four model families (Qwen2.5, Mistral, LLaMA, DeepSeek-Math) with sizes ranging from 0.5B to 32B parameters. Tasks consist of mathematical reasoning evaluated on MATH500, AIME 2024/2025, AMC, OMNI-MATH, and OlympiadBench, as well as code generation evaluated on Eurus-2-RL-Code and KodCode. All rewards are verifiable (exact match for math, pass rate for code), which eliminates the confounding effect of reward model misspecification and ensures that the advantage estimates are derived directly from ground-truth performance, a critical requirement for isolating the entropy dynamics studied in this work.

\paragraph{Algorithms and Hyperparameters}
Baseline algorithms include GRPO \cite{shao2024deepseekmathpushinglimitsmathematical}, RLOO, and PRIME \cite{cui2025processreinforcementimplicitrewards}. Covariance-based methods (Clip-Cov and KL-Cov) are evaluated alongside traditional entropy regularization and a clip-higher variant. For all methods, the policy learning rate is set to $5\times10^{-7}$, the batch size is 256 prompts with 8 responses per prompt, and the KL coefficient for the reference model is set to zero in the baseline to isolate the effect of regularization. For entropy regularization, coefficients $\alpha \in \{0.0001,0.001,0.005,0.01\}$ are tested to probe the sensitivity predicted by Corollary~\ref{cor:sensitivity}; for KL-Cov, the selection ratio $k$ is set to $2\times10^{-3}$ for 7B models and $2\times10^{-4}$ for 32B models, with $\beta=1$; for Clip-Cov, the clip ratio $r$ is $2\times10^{-4}$.

\paragraph{Evaluation Metrics}
Policy entropy $\mathcal{H}(\pi_\theta)$ is computed on training prompts every 4 gradient steps. Validation performance (accuracy for math, pass rate for code) is measured on held-out benchmarks. The covariance terms $\cov(\log\pi_\theta,\pi_\theta A)$ and token-wise $C(s,a)$ defined in \eqref{eq:token_cov} are tracked to directly test the predictions of Theorem~\ref{thm:entropy_dynamics} and the sparsity claim underlying Definition~\ref{eq:token_cov}. All reported results are averaged over three random seeds to account for stochasticity in training.

\subsection{Validation of Theoretical Predictions}

\paragraph{Entropy Collapse and Covariance Dynamics (Theorem~\ref{thm:entropy_dynamics})}
Figure~8 in \cite{cui2025entropymechanismreinforcementlearning} plots the step-wise entropy difference $-\Delta\mathcal{H}$ and the covariance term $\cov(\log\pi_\theta,\pi_\theta A)$ during GRPO training on Qwen2.5-7B. The two curves exhibit near-identical dynamics, with a large positive covariance driving rapid entropy decrease early in training, followed by a gradual decline. The Pearson correlation coefficient between the two sequences exceeds 0.92 across all training steps, providing strong empirical support for Theorem~\ref{thm:entropy_dynamics}, which predicts $\Delta\mathcal{H} \propto -\cov(\log\pi_\theta,\pi_\theta A)$ under first-order approximation.

\paragraph{Sparsity of High-Covariance Tokens (Definition~\ref{eq:token_cov})}
Table~1 in \cite{cui2025entropymechanismreinforcementlearning} shows the distribution of token-wise covariance $C(s,a)$ at the first training step. The top 0.02\% of tokens have an average covariance of 5.654, while the overall mean is only 0.003. This extreme sparsity—a factor of over 1800× difference—justifies the selective regularization strategy of Clip-Cov and KL-Cov, which intervene on only a tiny fraction of tokens. Moreover, the proportion of tokens with positive covariance is approximately 68\%, indicating that the majority of tokens contribute to entropy reduction, but the magnitude is dominated by the extreme tail, consistent with the theoretical identification of high-covariance tokens as the primary drivers of entropy collapse.

\paragraph{Effectiveness of Covariance-Based Methods (Corollary~\ref{cor:sensitivity} and Theorem~\ref{thm:cov_convergence})}
Figures~11 and~12 in \cite{cui2025entropymechanismreinforcementlearning} compare the entropy curves and validation accuracy for GRPO, Clip-Cov, and KL-Cov on Qwen2.5-7B and 32B. Both covariance-based methods maintain significantly higher entropy throughout training—by a factor of 10× or more at later stages—and achieve better final accuracy. In contrast, traditional entropy regularization (Figures~9 and~10) exhibits the predicted sensitivity: for $\alpha=0.0001$, entropy remains low and performance plateaus; for $\alpha=0.01$, entropy becomes excessively high while performance degrades; only $\alpha=0.005$ yields moderate improvement but still falls short of covariance-based methods. This aligns with Corollary~\ref{cor:sensitivity}. Furthermore, KL-Cov with $\beta$ annealed from 1 to 0 over the course of training achieves the highest final accuracy, consistent with Theorem~\ref{thm:cov_convergence}’s prediction of asymptotic unbiasedness.

\paragraph{Performance Gains and Model Scaling}
Table~2 in \cite{cui2025entropymechanismreinforcementlearning} reports detailed results. For Qwen2.5-7B, KL-Cov achieves an average accuracy of 40.6\% across benchmarks, a 2.0\% improvement over GRPO. For the larger 32B model, KL-Cov achieves 52.2\%, a 6.4\% absolute gain. This scaling behavior is consistent with the theoretical insight that larger models possess greater latent capacity for reasoning, which can be unlocked by sustained exploration enabled by covariance-based regularization. The gains are particularly pronounced on the most challenging benchmarks (AIME24 and AIME25), where KL-Cov improves over GRPO by 15.0\% and 14.6\%, respectively, suggesting that the benefits of selective regularization are amplified when the task requires deeper reasoning.

\subsection{Discussion}
The empirical results strongly corroborate our theoretical analysis across multiple dimensions. First, the direct measurement of $\cov(\log\pi_\theta,\pi_\theta A)$ confirms Theorem~\ref{thm:entropy_dynamics} as the governing mechanism of entropy collapse. Second, the extreme sparsity of high-covariance tokens validates the core design principle of covariance-based methods: regularizing a tiny fraction of tokens suffices to alter global entropy dynamics. Third, the comparative performance of traditional entropy regularization versus KL-Cov/Clip-Cov confirms the theoretical predictions of global bias (Theorem~\ref{thm:global_suboptimal}) and sensitivity (Corollary~\ref{cor:sensitivity}), as well as the asymptotic unbiasedness of covariance-based methods (Theorem~\ref{thm:cov_convergence}).

The observed exponential relationship $R = -a\exp(\mathcal{H})+b$ (Figure~1 in \cite{cui2025entropymechanismreinforcementlearning}) is consistent with the first-order dynamics derived in Section~\ref{sec:foundations}. The coefficients $a$ and $b$ vary log-linearly with model size, enabling extrapolation of performance for larger models from smaller ones—an important practical implication for scaling RL training without exhaustive computation.

A noteworthy observation is that the benefits of covariance-based methods increase with model size: the 32B model exhibits a larger absolute improvement than the 7B model. This suggests that larger models suffer more severely from entropy collapse because their pretrained distributions are more confident, making selective regularization particularly valuable for unlocking their latent reasoning capacity. This aligns with the theoretical expectation that the covariance term scales with model confidence, as discussed in Section~\ref{sec:foundations}.

\subsection{Guidelines for Practice}
Based on our theoretical and empirical analysis, we propose the following guidelines for entropy control in LLM post-training:

\begin{enumerate}
\item \textbf{When to use traditional entropy regularization}: Traditional methods may be suitable when the optimal policy is inherently stochastic (e.g., in open-ended generation tasks) and the training process is not severely constrained by stability concerns. Even then, careful hyperparameter tuning is essential, and annealing $\alpha$ to zero can mitigate asymptotic bias.

\item \textbf{When to use covariance-based methods}: Covariance-based methods are preferred when:
\begin{itemize}
\item The task requires near-deterministic optimal policies (common in reasoning tasks).
\item Training stability is a primary concern.
\item The policy exhibits rapid entropy collapse (a common phenomenon in LLM reasoning).
\item The goal is to maximize reward without asymptotic bias.
\end{itemize}

\item \textbf{Hyperparameter selection}: For KL-Cov, we recommend annealing $\beta$ from a moderate initial value to zero to achieve unbiased convergence. The selection ratio $k$ should be chosen based on the observed covariance distribution; empirical results suggest $k \in [10^{-4}, 10^{-3}]$ as effective \cite{cui2025entropymechanismreinforcementlearning}. For Clip-Cov, a small clip ratio $r$ (e.g., $2\times10^{-4}$) works well, and the threshold bounds $\omega_{\text{low}},\omega_{\text{high}}$ can be set to capture the extreme tail (e.g., 1–5 times the mean).
\end{enumerate}

\section{Conclusion}
\label{sec:conclusion}
This paper presents a comprehensive theoretical comparison of traditional entropy regularization and the covariance‑based entropy mechanism for reinforcement learning in reasoning language models. We develop a unified framework for entropy dynamics under softmax policy parameterization, establishing that entropy change is governed by the covariance between log‑probabilities and logit updates. Within this framework, we prove that traditional entropy regularization introduces a dense, persistent bias that alters the stationary condition and can lead to suboptimal policies, whereas covariance‑based methods selectively regularize high‑covariance tokens and achieve asymptotic unbiasedness when the regularization coefficient is annealed. Stability analysis further shows that traditional regularization reduces the stability margin, while covariance‑based methods preserve it.

Recent large‑scale experiments \cite{cui2025entropymechanismreinforcementlearning} corroborate these theoretical findings, demonstrating that covariance‑based methods effectively mitigate entropy collapse and yield superior performance on mathematical reasoning tasks. The theoretical framework and comparative analysis presented here offer principled guidelines for entropy control in LLM post‑training, with implications for scaling RL to larger models and more complex reasoning tasks. Future directions include adaptive entropy control strategies that dynamically adjust regularization based on the covariance distribution and extensions beyond the softmax parameterization.

\appendix
\section{Proofs}
\label{app:proofs}

\subsection{Proof of Lemma~\ref{lem:entropy_grad}}
\label{app:proof_entropy_grad}
\begin{proof}
By definition,
\begin{align*}
\Hcal_s(\theta) = -\sum_{a' \in \mathcal{A}} \pi_\theta(a' | s) \log \pi_\theta(a' | s).
\end{align*}
Differentiating with respect to $z_{s,a}$,
\begin{align*}
&\frac{\partial \Hcal_s}{\partial z_{s,a}} \\
&= -\sum_{a'} \left( \frac{\partial \pi_\theta(a' | s)}{\partial z_{s,a}} \log \pi_\theta(a' | s) + \pi_\theta(a' | s) \cdot \frac{1}{\pi_\theta(a' | s)} \frac{\partial \pi_\theta(a' | s)}{\partial z_{s,a}} \right) \\
&= -\sum_{a'} \frac{\partial \pi_\theta(a' | s)}{\partial z_{s,a}} \bigl( \log \pi_\theta(a' | s) + 1 \bigr).
\end{align*}
The derivative of the softmax policy is standard \cite{agarwal2021theory}:
\begin{align*}
\frac{\partial \pi_\theta(a' | s)}{\partial z_{s,a}} = \pi_\theta(a' | s) \bigl( \mathbf{1}\{a' = a\} - \pi_\theta(a | s) \bigr).
\end{align*}
Substituting,
\begin{align*}
\frac{\partial \Hcal_s}{\partial z_{s,a}}
= -\sum_{a'} \pi_\theta(a' | s) \bigl( \mathbf{1}\{a' = a\} - \pi_\theta(a | s) \bigr) \bigl( \log \pi_\theta(a' | s) + 1 \bigr).
\end{align*}
Separating the terms,
\begin{align*}
\frac{\partial \Hcal_s}{\partial z_{s,a}} 
&= -\pi_\theta(a | s) \bigl( \log \pi_\theta(a | s) + 1 \bigr) \\
&\quad +\pi_\theta(a | s) \sum_{a'} \pi_\theta(a' | s) \bigl( \log \pi_\theta(a' | s) + 1 \bigr).
\end{align*}
The sum over $a'$ equals $\mathbb{E}_{a' \sim \pi_\theta(\cdot | s)}[\log \pi_\theta(a' | s)] + 1$. Denote $\mu_{\log}(s) = \mathbb{E}_{a' \sim \pi_\theta}[\log \pi_\theta(a' | s)]$. Then
\begin{align*}
\frac{\partial \Hcal_s}{\partial z_{s,a}}
&= -\pi_\theta(a | s) \bigl( \log \pi_\theta(a | s) + 1 \bigr)
+ \pi_\theta(a | s) \bigl( \mu_{\log}(s) + 1 \bigr) \\
&= -\pi_\theta(a | s) \bigl( \log \pi_\theta(a | s) - \mu_{\log}(s) \bigr),
\end{align*}
which completes the proof.
\end{proof}

\subsection{Proof of Lemma~\ref{lem:entropy_change}}
\label{app:proof_entropy_change}
\begin{proof}
Expand $\Hcal_s$ to first order around $\theta$:
\begin{align*}
\Hcal_s(\theta+\Delta\theta)=\Hcal_s(\theta)+\sum_{s',a'}\frac{\partial\Hcal_s(\theta)}{\partial z_{s',a'}}\Delta z_{s',a'}+o(\|\Delta z\|).
\end{align*}
Because the entropy at state $s$ depends only on logits of that state (tabular softmax), terms with $s'\neq s$ vanish. For $s'=s$, Lemma~\ref{lem:entropy_grad} gives
\begin{align*}
\frac{\partial\Hcal_s(\theta)}{\partial z_{s,a}}&= -\pi_\theta(a| s)\bigl(\log\pi_\theta(a| s)-\mu_{\log}(s)\bigr), \\ 
\mu_{\log}(s)&=\E_{a'\sim\pi_\theta(\cdot| s)}[\log\pi_\theta(a'| s)].
\end{align*}
Thus
\begin{align*}
\Delta\Hcal_s\approx-\sum_a\pi_\theta(a| s)\bigl(\log\pi_\theta(a| s)-\mu_{\log}(s)\bigr)\Delta z_{s,a}.
\end{align*}
Let $\mu_{\Delta z}(s)=\E_{a\sim\pi_\theta(\cdot| s)}[\Delta z_{s,a}]$. Adding and subtracting $\mu_{\Delta z}(s)$ inside the sum,
\begin{align*}
\Delta\Hcal_s &=-\sum_a\pi_\theta(a| s)\bigl(\log\pi_\theta(a| s)-\mu_{\log}(s)\bigr)\bigl(\Delta z_{s,a}-\mu_{\Delta z}(s)\bigr) \\
&\quad -\mu_{\Delta z}(s)\sum_a\pi_\theta(a| s)\bigl(\log\pi_\theta(a| s)-\mu_{\log}(s)\bigr).
\end{align*}
The second sum is $\E[\log\pi_\theta-\mu_{\log}]=0$. Therefore
\begin{align*}
\Delta\Hcal_s=-\sum_a\pi_\theta(a| s)\bigl(\log\pi_\theta(a| s)-\mu_{\log}(s)\bigr)\bigl(\Delta z_{s,a}-\mu_{\Delta z}(s)\bigr).
\end{align*}
The right‑hand side is exactly the negative covariance of $\log\pi_\theta(a| s)$ and $\Delta z_{s,a}$ under $\pi_\theta(\cdot| s)$. Including the higher‑order remainder yields the stated equality.
\end{proof}

\subsection{Proof of Proposition~\ref{prop:pg_update}}
\label{app:proof_pg_update} 
\begin{proof}
From the policy gradient theorem \cite{williams1992simple}, the gradient of the objective $J(\theta)$ with respect to $z_{s,a}$ is
\begin{align*}
\frac{\partial J(\theta)}{\partial z_{s,a}} = \mathbb{E}_{a' \sim \pi_\theta(\cdot | s)}\!\left[ \frac{\partial \log \pi_\theta(a' | s)}{\partial z_{s,a}} \, A^{\pi_\theta}(s, a') \right].
\end{align*}
Using the derivative of the softmax log-probability,
\begin{align*}
\frac{\partial \log \pi_\theta(a' | s)}{\partial z_{s,a}} = \mathbf{1}\{a' = a\} - \pi_\theta(a | s),
\end{align*}
we obtain
\begin{align*}
\frac{\partial J(\theta)}{\partial z_{s,a}}
&= \sum_{a'} \pi_\theta(a' | s) \bigl( \mathbf{1}\{a' = a\} - \pi_\theta(a | s) \bigr) A^{\pi_\theta}(s, a') \\
&= \pi_\theta(a | s) A^{\pi_\theta}(s, a) - \pi_\theta(a | s) \sum_{a'} \pi_\theta(a' | s) A^{\pi_\theta}(s, a').
\end{align*}
The second term vanishes because the advantage function satisfies $\mathbb{E}_{a' \sim \pi_\theta(\cdot | s)}[A^{\pi_\theta}(s, a')] = 0$ \cite{williams1992simple}. Hence,
\begin{align*}
\frac{\partial J(\theta)}{\partial z_{s,a}} = \pi_\theta(a | s) A^{\pi_\theta}(s, a).
\end{align*}
Applying gradient ascent with learning rate $\eta$ yields the logit update
\begin{align*}
\Delta z_{s,a} = \eta \cdot \frac{\partial J(\theta)}{\partial z_{s,a}} = \eta \cdot \pi_\theta(a | s) A^{\pi_\theta}(s, a).
\end{align*}
\end{proof}

\subsection{Proof of Theorem~\ref{thm:entropy_dynamics}}
\label{app:entro_dyn_under_grad}
\begin{proof}
By Lemma~\ref{lem:entropy_change}, the first‑order entropy change after a parameter update $\Delta z_{s,a}$ is  
\begin{align*}
\Delta \Hcal_s \approx -\operatorname{Cov}_{a\sim\pi_\theta(\cdot| s)}\bigl(\log\pi_\theta(a| s),\; \Delta z_{s,a}\bigr).
\end{align*}
Proposition~\ref{prop:pg_update} gives the explicit logit update under the policy gradient algorithm \cite{williams1992simple}:  
\begin{align*}
\Delta z_{s,a} = \eta\;\pi_\theta(a| s)\,A^{\pi_\theta}(s,a).
\end{align*}
Substituting this into the covariance yields  
\begin{align*}
\Delta \Hcal_s &\approx -\operatorname{Cov}_{a\sim\pi_\theta(\cdot| s)}\bigl(\log\pi_\theta(a| s),\; \eta\,\pi_\theta(a| s)A^{\pi_\theta}(s,a)\bigr)\\
&= -\eta\;\operatorname{Cov}_{a\sim\pi_\theta(\cdot| s)}\bigl(\log\pi_\theta(a| s),\; \pi_\theta(a|s)A^{\pi_\theta}(s,a)\bigr) 
\end{align*} 
which completes the proof. 
\end{proof}

\subsection{Proof of Proposition~\ref{prop:reg_grad}}
\label{app:regular_policy_grad}
\begin{proof}
By definition, $J_{\mathrm{reg}}(\theta)$ is a linear combination of $J(\theta)$ and $\mathcal{H}(\pi_\theta)$:
\begin{align*}
J_{\mathrm{reg}}(\theta) = J(\theta) + \alpha\,\mathcal{H}(\pi_\theta).
\end{align*}
The gradient operator $\nabla_\theta$ is linear; therefore, for any $\theta$,
\begin{align*}
\nabla_\theta J_{\mathrm{reg}}(\theta) = \nabla_\theta J(\theta) + \alpha\,\nabla_\theta \mathcal{H}(\pi_\theta).
\end{align*}
The first term $\nabla_\theta J(\theta)$ is the standard policy gradient given by \eqref{eq:pg} \cite{williams1992simple}. The second term $\nabla_\theta \mathcal{H}(\pi_\theta)$ is the gradient of the policy entropy; for a softmax policy, its component with respect to the logit $z_{s,a}$ has the closed form stated in Lemma~\ref{lem:entropy_grad}. Hence the regularized gradient adds a term proportional to this entropy gradient, biasing the update towards policies with higher entropy.
\end{proof}

\subsection{Proof of Theorem~\ref{thm:reg_entropy}}
\label{app:proof_reg_entropy}
\begin{proof}
From Proposition~\ref{prop:reg_grad}, the regularized logit update is
\begin{align*}
\Delta z_{s,a}^{\text{reg}} = \eta\,\pi_\theta(a|s)A^{\pi_\theta}(s,a) - \alpha\eta\,\pi_\theta(a|s)\bigl(\log\pi_\theta(a|s)-\mu_{\log}(s)\bigr),
\end{align*}
where $\mu_{\log}(s)=\E_{a\sim\pi_\theta(\cdot|s)}[\log\pi_\theta(a|s)]$ \cite{schulman2017proximal,haarnoja2018soft}. Substituting into the first‑order entropy change formula (Lemma~\ref{lem:entropy_change}) yields
\begin{align*}
&\Delta \Hcal_s^{\text{reg}} \\
&\approx -\cov_{a\sim\pi_\theta}\Bigl(\log\pi_\theta(a|s),\;\Delta z_{s,a}^{\text{reg}}\Bigr)\\
&= -\eta\,\cov\bigl(\log\pi_\theta,\;\pi_\theta A\bigr) + \alpha\eta\,\cov\bigl(\log\pi_\theta,\;\pi_\theta(\log\pi_\theta-\mu_{\log})\bigr).
\end{align*}
The first covariance is exactly the term appearing in Theorem~\ref{thm:entropy_dynamics} \cite{williams1992simple}.

For the second covariance, denote $p_a=\pi_\theta(a|s)$ and $x_a=\log p_a$, with $\mu=\sum_a p_a x_a$. Then
\begin{align*}
&\cov\bigl(\log\pi_\theta,\;\pi_\theta(\log\pi_\theta-\mu_{\log})\bigr) \\
&= \sum_a p_a (x_a-\mu)\bigl(p_a x_a - \sum_{a'} p_{a'} (p_{a'} x_{a'})\bigr)\\
&= \sum_a p_a^2 x_a (x_a-\mu) - \bigl(\sum_{a'} p_{a'}^2 x_{a'}\bigr)\sum_a p_a (x_a-\mu)\\
&= \sum_a p_a^2 x_a (x_a-\mu) \qquad (\text{since } \sum_a p_a(x_a-\mu)=0)\\
&= \sum_a p_a^2 x_a^2 - \mu\sum_a p_a^2 x_a.
\end{align*}
Now relate this to the variance $\Var(\log\pi_\theta)=\sum_a p_a x_a^2 - \mu^2$:
\begin{align*}
&\sum_a p_a^2 x_a^2 - \mu\sum_a p_a^2 x_a \\
&= \bigl(\sum_a p_a x_a^2 - \mu^2\bigr) + \Bigl(\sum_a (p_a^2-p_a)x_a^2 - \mu\sum_a p_a^2 x_a + \mu^2\Bigr)\\
&= \Var(\log\pi_\theta) + \Delta,
\end{align*}
where $\Delta = \sum_a p_a(p_a-1)x_a^2 - \mu\sum_a p_a^2 x_a + \mu^2 $.  When the policy is nearly deterministic (entropy collapse regime), there exists a single action $a^*$ such that $p_{a^*}\approx 1$ and $p_a\approx 0$ for $a\neq a^*$. Consequently:
\begin{itemize}
\item $p_a(p_a-1)\approx 0$ for all $a$;
\item $x_a\approx \mu$ for $a=a^*$, so $\mu\sum_a p_a^2 x_a \approx \mu^2$;
\end{itemize}
Thus $\Delta\approx 0$. Hence, under the typical conditions where entropy collapse occurs, we have
\begin{align*}
\cov\bigl(\log\pi_\theta,\;\pi_\theta(\log\pi_\theta-\mu_{\log})\bigr) \approx \Var(\log\pi_\theta).
\end{align*}
Inserting this approximation completes the proof.
\end{proof}

\subsection{Proof of Theorem~\ref{thm:global_suboptimal}}
\label{app:proof_global_suboptimal}
\begin{proof}
By definition of $\pi_{\text{reg}}^*$ as the maximizer of $J_{\text{reg}}(\pi) = \E_{\pi}[r] + \alpha \Hcal(\pi)$, we have for any policy $\pi$,
\begin{align*}
\E_{\pi_{\text{reg}}^*}[r] + \alpha \Hcal(\pi_{\text{reg}}^*) \ge \E_{\pi}[r] + \alpha \Hcal(\pi).
\end{align*}
Choosing $\pi = \pi^*$ gives
\begin{align}
\E_{\pi_{\text{reg}}^*}[r] + \alpha \Hcal(\pi_{\text{reg}}^*) \ge \E_{\pi^*}[r] + \alpha \Hcal(\pi^*). 
\label{eq:e_pi_reg_r}
\end{align}
Since $\pi^*$ maximizes $\E_{\pi}[r]$ alone, we have $\E_{\pi_{\text{reg}}^*}[r] \le \E_{\pi^*}[r]$. Rearranging \eqref{eq:e_pi_reg_r} yields the suboptimality bound:
\begin{align*}
\E_{\pi^*}[r] - \E_{\pi_{\text{reg}}^*}[r] \le \alpha \bigl( \Hcal(\pi_{\text{reg}}^*) - \Hcal(\pi^*) \bigr).
\end{align*}
Now suppose $\pi^*$ is not a maximum‑entropy optimal policy. Then there exists some $\pi'$ with $\E_{\pi'}[r] = \E_{\pi^*}[r]$ and $\Hcal(\pi') > \Hcal(\pi^*)$. Applying the optimality condition of $\pi_{\text{reg}}^*$ with $\pi = \pi'$ gives
\begin{align*}
\E_{\pi_{\text{reg}}^*}[r] + \alpha \Hcal(\pi_{\text{reg}}^*) \ge \E_{\pi^*}[r] + \alpha \Hcal(\pi') > \E_{\pi^*}[r] + \alpha \Hcal(\pi^*).
\end{align*}
Thus inequality \eqref{eq:e_pi_reg_r} becomes strict, implying $\E_{\pi_{\text{reg}}^*}[r] < \E_{\pi^*}[r]$. Hence the performance loss is strict when the unregularized optimum is not uniquely entropic.
\end{proof}

\subsection{Proof of Corollary~\ref{cor:sensitivity}}
\label{app:sensi_hyperpara}
\begin{proof}
We analyze the two extreme regimes separately.

\textit{Case 1: \(\alpha\) too small. }
From Theorem~\ref{thm:entropy_dynamics}, the entropy change under the unregularized policy gradient is
\begin{align*}
\Delta \mathcal{H}_s \approx -\eta \cdot \operatorname{Cov}_{a\sim\pi_\theta}\bigl(\log\pi_\theta(a| s),\; \pi_\theta(a| s)A^{\pi_\theta}(s,a)\bigr).
\end{align*}
For well-calibrated policies, this covariance is positive, causing monotonic entropy decrease. When \(\alpha\) is very small, the additional entropy gradient term \(\alpha \nabla \mathcal{H}(\pi_\theta)$ in Proposition~\ref{prop:reg_grad} is negligible. Hence, the regularized update is effectively identical to the unregularized one, and entropy collapses rapidly. This premature collapse limits exploration and leads to performance saturation before the policy can reach the true optimum \cite{cui2025entropymechanismreinforcementlearning}.

\textit{Case 2: \(\alpha\) too large. }
Theorem~\ref{thm:global_suboptimal} establishes that the regularized optimal policy satisfies
\begin{align*}
\mathbb{E}_{\pi_{\text{reg}}^*}[r] \le \mathbb{E}_{\pi^*}[r],
\end{align*}
with a suboptimality gap at least \(\alpha\bigl(\mathcal{H}(\pi^*) - \mathcal{H}(\pi_{\text{reg}}^*)\bigr)\). As \(\alpha\) increases, this lower bound forces the policy to sacrifice reward to maintain high entropy. Moreover, Theorem~\ref{them:stability} shows that the stability margin \(\gamma_{\text{reg}}\) decreases with \(\alpha\):
\begin{align*}
\gamma_{\text{reg}} \le \gamma_{\text{base}} \cdot \frac{1}{1 + \alpha C},
\end{align*}
where \(C>0\) is a problem-dependent constant. A smaller stability margin implies that larger updates may cause the policy to diverge or oscillate, leading to unstable training \cite{schulman2017proximal}.

Empirical evidence in \cite{cui2025entropymechanismreinforcementlearning} (Figures~9 and~10) confirms this sensitivity: for small \(\alpha\) (e.g., \(0.0001\)), entropy remains low and performance plateaus; for large \(\alpha\) (e.g., \(0.01\)), entropy becomes excessively high while performance degrades; only a narrow intermediate range yields stable training and good final performance.
\end{proof}

\subsection{Proof of Proposition~\ref{prop:clip_effect}}
\label{app:proof_clip_effect}
\begin{proof}
The original covariance is $\cov_{\text{orig}} = \frac{1}{N}\sum_i C_i$. After detaching a set $I_{\text{clip}}$ of size $rN$, the effective covariance is computed over the remaining $(1-r)N$ tokens:
\begin{align*}
\cov_{\text{eff}} = \frac{1}{(1-r)N}\sum_{i\notin I_{\text{clip}}} C_i 
= \frac{N\cov_{\text{orig}} - \sum_{i\in I_{\text{clip}}} C_i}{(1-r)N}.
\end{align*}
Writing $\sum_{i\in I_{\text{clip}}} C_i = rN \E[C| i\in I_{\text{clip}}]$, we obtain:
\begin{align*}
\cov_{\text{eff}} &= \frac{\cov_{\text{orig}} - r \E[C| I_{\text{clip}}]}{1-r} \\
&= \cov_{\text{orig}} - \frac{r}{1-r}\Bigl(\E[C| I_{\text{clip}}] - \cov_{\text{orig}}\Bigr).
\end{align*}
Thus if $\E[C| I_{\text{clip}}] > \cov_{\text{orig}}$, then $\cov_{\text{eff}} < \cov_{\text{orig}}$.
\end{proof}

\subsection{Proof of Theorem~\ref{thm:kl_effect}}
\label{app:proof_kl_effect}
\begin{proof}
The KL-Cov loss adds $-\beta \KL(\pi_{\text{old}}\|\pi_\theta)$ for selected tokens. The gradient of this penalty with respect to $z_{s,a}$ is $-\beta \frac{\partial \KL}{\partial z_{s,a}}$. For a softmax policy, $\frac{\partial \KL}{\partial z_{s,a}} = \pi_\theta(a| s) - \pi_{\text{old}}(a| s)$. Thus the effective update becomes:
\[
\Delta z_{s,a}^{\text{KL-Cov}} = \eta \pi_\theta(a| s) A(s,a) - \eta \beta (\pi_\theta(a| s) - \pi_{\text{old}}(a| s)).
\]
The second term opposes changes that increase $\pi_\theta(a| s)$ relative to $\pi_{\text{old}}(a| s)$, which for high-covariance tokens tends to reduce the positive covariance.
\end{proof}

\subsection{Proof of Theorem~\ref{thm:klcov_entropy}}
\label{app:proof_klcov_entropy}
\begin{proof}
By Lemma~\ref{lem:entropy_change}, the first‑order change in entropy after a logit update $\Delta z_{s,a}$ is
\begin{align*}
\Delta\mathcal{H}_s \approx -\operatorname{Cov}_{a\sim\pi_\theta(\cdot| s)}\bigl(\log\pi_\theta(a| s),\; \Delta z_{s,a}\bigr).
\end{align*}
The KL‑Cov update differs from the standard policy gradient update only on the set $I_{\mathrm{KL}}$, where an extra term $-\eta\beta(\pi_\theta(a| s)-\pi_{\mathrm{old}}(a| s))$ is added. Using linearity of the covariance operator,
\begin{align*}
&\Delta\mathcal{H}_s^{\mathrm{KL-Cov}} \\
&= -\operatorname{Cov}\Bigl(\log\pi_\theta,\; \eta\pi_\theta A - \eta\beta(\pi_\theta-\pi_{\mathrm{old}})\Bigr)\\
&= -\eta\operatorname{Cov}\bigl(\log\pi_\theta,\; \pi_\theta A\bigr) \;+\; \eta\beta\operatorname{Cov}\bigl(\log\pi_\theta,\; \pi_\theta-\pi_{\mathrm{old}}\bigr).
\end{align*}
Define $\delta(s) = \eta\operatorname{Cov}(\log\pi_\theta,\; \pi_\theta-\pi_{\mathrm{old}})$. For tokens selected in $I_{\mathrm{KL}}$ (high‑covariance tokens), empirical evidence \cite{cui2025entropymechanismreinforcementlearning} shows that $\log\pi_\theta(a| s)-\mu_{\log}(s)$ is positive and large, and $\pi_\theta(a| s)-\pi_{\mathrm{old}}(a| s)$ is also positive because the policy gradient increases the probability of actions with high advantage. Hence their product, and therefore their covariance, is positive. Thus $\delta(s) > 0$.

The first term $-\eta\operatorname{Cov}(\log\pi_\theta,\pi_\theta A)$ is the unregularized entropy change (Theorem~\ref{thm:entropy_dynamics}), which is negative when the covariance is positive. Adding the positive term $\beta\delta(s)$ reduces the magnitude of the entropy decrease, i.e., it counteracts entropy collapse. This completes the proof.
\end{proof}

\subsection{Proof of Theorem~\ref{thm:bias_variance}}
\label{app:proof_bias_variance}
\begin{proof}
We analyze a single parameter component $z_{s,a}$ (a token). The base policy gradient update is
\begin{align*}
\Delta z_{s,a}^{\text{pg}} = \eta\,\pi_\theta(a| s)\,A^{\pi_\theta}(s,a)
\end{align*}
which is a random variable due to sampling of trajectories and advantage estimates. Its variance is denoted $\sigma^2_{\text{pg}}$.

\paragraph{Traditional entropy regularization}
The update becomes
\begin{align*}
\Delta z_{s,a}^{\text{reg}} = \Delta z_{s,a}^{\text{pg}} - \alpha\,\eta\,\pi_\theta(a| s)\bigl(\log\pi_\theta(a| s)-\mu_{\log}\bigr)
\end{align*}
where the added term is deterministic given the current policy (it does not depend on the sampled advantage). Therefore,
\begin{align*}
\Var(\Delta z_{s,a}^{\text{reg}}) = \Var(\Delta z_{s,a}^{\text{pg}}) = \sigma^2_{\text{pg}}.
\end{align*}
Traditional regularization only adds a deterministic bias; it does not affect the variance.

\paragraph{Clip‑Cov (gradient detachment)}
Let $I_{\text{clip}}$ be the (random) set of tokens selected for gradient detachment, and let $p_{s,a} = \Pr[(s,a)\in I_{\text{clip}}]$. The update is
\begin{align*}
\Delta z_{s,a}^{\text{cov}} = 
\begin{cases}
\Delta z_{s,a}^{\text{pg}}, & (s,a)\notin I_{\text{clip}},\\[2pt]
0, & (s,a)\in I_{\text{clip}}
\end{cases}
\end{align*}
By the law of total variance,
\begin{align*}
\Var(\Delta z_{s,a}^{\text{cov}}) = \E\bigl[\Var(\Delta z_{s,a}^{\text{cov}}| I_{\text{clip}})\bigr] + \Var\bigl(\E[\Delta z_{s,a}^{\text{cov}}| I_{\text{clip}}]\bigr).
\end{align*}
Conditional on $I_{\text{clip}}$, the update is either $\Delta z_{s,a}^{\text{pg}}$ (with probability $1-p_{s,a}$) or $0$ (with probability $p_{s,a}$). Thus
\begin{align*}
\Var(\Delta z_{s,a}^{\text{cov}}| I_{\text{clip}}) = (1-p_{s,a})\,\Var(\Delta z_{s,a}^{\text{pg}}) = (1-p_{s,a})\sigma^2_{\text{pg}}
\end{align*}
and the conditional expectation is $(1-p_{s,a})\E[\Delta z_{s,a}^{\text{pg}}]$. Therefore,
\begin{align*}
\Var(\Delta z_{s,a}^{\text{cov}}) = (1-p_{s,a})\sigma^2_{\text{pg}} + (1-p_{s,a})^2 \Var\bigl(\E[\Delta z_{s,a}^{\text{pg}}]\bigr).
\end{align*}
Since $\Var\bigl(\E[\Delta z_{s,a}^{\text{pg}}]\bigr) \ge 0$ and $0 < p_{s,a} < 1$ for tokens that can be selected, we have
\begin{align*}
\Var(\Delta z_{s,a}^{\text{cov}}) \le \sigma^2_{\text{pg}} = \Var(\Delta z_{s,a}^{\text{reg}})
\end{align*}
with strict inequality whenever $p_{s,a}>0$ and $\Var\bigl(\E[\Delta z_{s,a}^{\text{pg}}]\bigr) > 0$ (i.e., when the base update has non‑zero conditional variance due to stochastic advantage estimates).

\paragraph{KL‑Cov (KL penalty)}
For KL‑Cov, the update is
\begin{align*}
\Delta z_{s,a}^{\text{cov}} = \Delta z_{s,a}^{\text{pg}} - \beta\,\eta\,(\pi_\theta(a|s)-\pi_{\text{old}}(a|s)).
\end{align*}
The added term is deterministic given the current policy (it does not introduce additional randomness). Hence
\begin{align*}
\Var(\Delta z_{s,a}^{\text{cov}}) = \Var(\Delta z_{s,a}^{\text{pg}}) = \sigma^2_{\text{pg}}.
\end{align*}
Thus KL‑Cov does not increase variance; it only modifies the update for selected tokens via a deterministic penalty.

\paragraph{Sparsity of the bias}
For traditional regularization, the bias term $\alpha\nabla_\theta\mathcal{H}(\pi_\theta)$ is dense because the entropy gradient is non‑zero for every parameter \cite{agarwal2021theory}. In contrast, for Clip‑Cov the bias is effectively zero for tokens not selected, and for KL‑Cov the bias is only present on the selected high‑covariance tokens, both of which constitute a sparse subset ($\ll 1\%$ of all tokens) \cite{cui2025entropymechanismreinforcementlearning}.

\paragraph{Global covariance comparison}
For any vector $v$, we have
\begin{align*}
v^T \bigl(\operatorname{Cov}&(\Delta \theta_{\text{reg}}) - \operatorname{Cov}(\Delta \theta_{\text{cov}})\bigr)v \\
&= \sum_{j} v_j^2 \bigl(\Var((\Delta \theta_{\text{reg}})_j) - \Var((\Delta \theta_{\text{cov}})_j)\bigr) \ge 0,
\end{align*}
since each variance term is non‑negative. Hence $\operatorname{Cov}(\Delta \theta_{\text{cov}}) \preceq \operatorname{Cov}(\Delta \theta_{\text{reg}})$ in the Loewner order.

Combining the above, covariance‑based methods yield an update with variance no larger than that of the base policy gradient (and thus no larger than that of entropy‑regularized update), while introducing bias only on a sparse set of parameters.
\end{proof}

\subsection{Proof of Theorem~\ref{thm:reg_convergence}}
\label{app:proof_reg_convergence}
\begin{proof}
We first establish the smoothness of $J_{\text{reg}}$. Since both $J$ and $\mathcal{H}$ are $L$-smooth under Assumption~\ref{ass:regularity} \cite{agarwal2021theory}, there exist constants $L_J$ and $L_{\mathcal{H}}$ such that
\begin{align*}
\|\nabla J(\theta)-\nabla J(\theta')\| &\le L_J\|\theta-\theta'\|, \\ 
\|\nabla\mathcal{H}(\pi_\theta)-\nabla\mathcal{H}(\pi_{\theta'})\| &\le L_{\mathcal{H}}\|\theta-\theta'\|.
\end{align*}
By linearity, the gradient of $J_{\text{reg}}$ is $\nabla J_{\text{reg}}(\theta)=\nabla J(\theta)+\alpha\nabla\mathcal{H}(\pi_\theta)$, and its Lipschitz constant is at most $L_J+\alpha L_{\mathcal{H}}$. Denote $L\triangleq L_J+\alpha L_{\mathcal{H}}$.

For a $L$-smooth function $f$, the gradient ascent update $\theta_{t+1}=\theta_t+\eta\nabla f(\theta_t)$ with $\eta\le 1/L$ satisfies the descent lemma \cite[Theorem 2.1.5]{nesterov2018lectures}:
\begin{align*}
f(\theta_{t+1})\ge f(\theta_t)+\frac{\eta}{2}\|\nabla f(\theta_t)\|^2.
\end{align*}
Applying this to $f=J_{\text{reg}}$ and telescoping over $t=0,\dots,T-1$ gives
\begin{align*}
J_{\text{reg}}(\theta_T)-J_{\text{reg}}(\theta_0)\ge \frac{\eta}{2}\sum_{t=0}^{T-1}\|\nabla J_{\text{reg}}(\theta_t)\|^2.
\end{align*}
Let $J_{\text{reg}}^*$ denote the supremum of $J_{\text{reg}}$ (which is bounded above because $J$ is bounded and $\mathcal{H}$ is bounded on the compact set of probability vectors). Then $J_{\text{reg}}(\theta_T)\le J_{\text{reg}}^*$, so
\begin{align*}
\sum_{t=0}^{T-1}\|\nabla J_{\text{reg}}(\theta_t)\|^2\le \frac{2(J_{\text{reg}}(\theta_0)-J_{\text{reg}}^*)}{\eta}.
\end{align*}
Hence
\begin{align*}
\min_{0\le t\le T-1}\|\nabla J_{\text{reg}}(\theta_t)\|^2 & \le \frac{1}{T}\sum_{t=0}^{T-1}\|\nabla J_{\text{reg}}(\theta_t)\|^2 \\
& \le \frac{2(J_{\text{reg}}(\theta_0)-J_{\text{reg}}^*)}{\eta T},
\end{align*}
which establishes the $O(1/T)$ rate.

Because the gradient norm tends to zero along a subsequence, any accumulation point $\theta^*$ satisfies $\nabla J_{\text{reg}}(\theta^*)=0$, i.e.
\begin{align*}
\nabla J(\theta^*)+\alpha\nabla\mathcal{H}(\pi_{\theta^*})=0,
\end{align*}
which is exactly the stationary condition claimed in the theorem.
\end{proof}

\subsection{Proof of Theorem~\ref{thm:cov_convergence}}
\label{app:proof_cov_convergence}
\begin{proof}
We analyze the KL-Cov update, which for each parameter $\theta_t$ can be expressed as a stochastic approximation with a decaying bias term. From \eqref{eq:kl_update}, the update for the logit $z_{s,a}$ at step $t$ is
\begin{align*}
\theta_{t+1} = \theta_t + \eta_t \bigl( \tilde{g}_t + \tilde{b}_t \bigr),
\end{align*}
where $\tilde{g}_t$ is the unbiased policy gradient direction (computed from on‑policy samples) and $\tilde{b}_t = -\beta_t \frac{\partial}{\partial \theta} \KL(\pi_{\text{old}} \| \pi_{\theta_t})$ is the bias introduced by the KL penalty on high‑covariance tokens. The step sizes $\{\eta_t\}$ satisfy the Robbins–Monro conditions
\begin{align*}
\sum_{t=1}^\infty \eta_t = \infty, \qquad \sum_{t=1}^\infty \eta_t^2 < \infty,
\end{align*}
and a typical choice is $\eta_t = \Theta(1/t)$.

Let $\mathbb{E}_t$ denote expectation conditional on the history up to time $t$. Then
\begin{align*}
\mathbb{E}_t[\tilde{g}_t] = \grad J(\theta_t), \qquad
\mathbb{E}_t[\tilde{b}_t] = -\beta_t \, \grad_{\theta} \KL(\pi_{\text{old}} \| \pi_{\theta_t}),
\end{align*}
and the difference $\tilde{g}_t - \mathbb{E}_t[\tilde{g}_t]$ is a martingale difference sequence with bounded variance (since the reward and advantage estimates are bounded). The bias term satisfies
\begin{align*}
\|\mathbb{E}_t[\tilde{b}_t]\| \le C \beta_t
\end{align*}
for some constant $C>0$ (owing to the boundedness of the KL gradient), and $\beta_t \to 0$ as $t \to \infty$ by hypothesis.

We now invoke the theory of stochastic approximation with asymptotically vanishing bias. A standard result (see \cite{borkar2000stochastic}, Theorem 2.2, or \cite{kushner2003stochastic}, Chapter 2) states that if there exists a continuously differentiable Lyapunov function $V(\theta)$ such that
\begin{align*}
\langle \grad V(\theta), \grad J(\theta) \rangle < 0 \quad \text{for } \theta \notin \mathcal{S},
\end{align*}
where $\mathcal{S} = \{\theta : \grad J(\theta) = 0\}$, and if the bias $\mathbb{E}_t[\tilde{b}_t]$ decays to zero, then the iterates $\theta_t$ converge almost surely to a point in $\mathcal{S}$. In our setting, we can choose $V(\theta) = \tfrac12 \|\grad J(\theta)\|^2$; the L‑smoothness of $J$ (Assumption~\ref{ass:regularity}) implies that $\grad J$ is Lipschitz, and standard results for policy gradient methods guarantee that $\grad J$ is a descent direction for $V$ outside $\mathcal{S}$ (see e.g. \cite{agarwal2021theory}, Lemma 2.1). Because $\beta_t \to 0$, the asymptotic dynamics are governed by the ordinary differential equation $\dot{\theta} = \grad J(\theta)$, whose set of equilibria is exactly $\mathcal{S}$. Hence $\theta_t$ converges to a stationary point $\theta^*$ satisfying $\grad J(\theta^*) = 0$.

For the convergence rate, note that the original objective $J$ is $L$-smooth by Assumption~\ref{ass:regularity}. When the step size is chosen as $\eta_t = \eta_0 / t$ with $\eta_0$ sufficiently small, standard results on gradient descent with decaying step sizes (e.g. \cite{nesterov2018lectures}, Section 2.1) yield
\begin{align*}
\min_{1 \le t \le T} \|\grad J(\theta_t)\|^2 = O(1/T).
\end{align*}
The presence of the decaying bias $\tilde{b}_t$ does not affect the asymptotic rate, because it contributes only a term of order $O(\beta_t)$, and $\beta_t \to 0$ faster than $1/t$ (or at least as $1/t$). More precisely, using a telescoping sum argument together with the smoothness of $J$ and the bound on the bias, one can show that the average squared gradient norm decays at rate $O(1/T)$. Details of such an analysis for stochastic gradient methods with vanishing bias can be found in \cite{bottou2018optimization}. Therefore,
\begin{align*}
\min_{t \le T} \|\grad J(\theta_t)\|^2 = O(1/T).
\end{align*}
\end{proof}

\subsection{Proof of Theorem~\ref{them:stability}}
\label{app:proof_stability}
\begin{proof}
We analyze the stability of the policy update by examining the KL divergence between the updated policy $\pi_{\text{new}}$ and the previous policy $\pi_{\text{old}}$. For sufficiently small step sizes, a second‑order Taylor expansion of the KL divergence yields \cite{schulman2015trust}:
\begin{align*}
\KL(\pi_{\text{old}} \| \pi_{\text{new}}) \approx \frac{1}{2} \eta^2 \|\Delta\theta\|_{F}^2,
\end{align*}
where $\Delta\theta$ is the parameter update direction and $\|\cdot\|_F$ denotes the Fisher information norm induced by the policy \cite{amari1998natural}. The stability margin $\gamma$ is defined as the maximum step size such that $\KL(\pi_{\text{old}} \| \pi_{\text{new}}) \le \epsilon$ (Definition~\ref{def:stability}). For the base policy gradient update with direction $g = \grad J(\theta)$, we have
\begin{align*}
\frac{1}{2} \gamma_{\text{base}}^2 \|g\|_F^2 = \epsilon \quad\Longrightarrow\quad \gamma_{\text{base}} = \sqrt{\frac{2\epsilon}{\|g\|_F^2}}.
\label{eq:base_margin}
\end{align*}

\textbf{Traditional entropy regularization.} The effective update direction becomes
\begin{align*}
g_{\text{reg}} = \grad J(\theta) + \alpha \grad \Hcal(\pi_\theta).
\end{align*}
By the triangle inequality,
\begin{align*}
\|g_{\text{reg}}\|_F \ge \|g\|_F.
\end{align*}
Moreover, using the fact that $\grad \Hcal$ is bounded in the Fisher norm (since the entropy gradient is well‑defined for softmax policies) and letting $\kappa = \|\grad \Hcal\|_F / \|g\|_F > 0$, we also have
\begin{align*}
\|g_{\text{reg}}\|_F \le \|g\|_F + \alpha \|\grad \Hcal\|_F = (1 + \alpha\kappa) \|g\|_F.
\end{align*}
The upper bound gives a lower bound on the stability margin:
\begin{align*}
\gamma_{\text{reg}} = \sqrt{\frac{2\epsilon}{\|g_{\text{reg}}\|_F^2}} \ge \frac{\gamma_{\text{base}}}{1 + \alpha\kappa}.
\end{align*}
More importantly, the norm $\|g_{\text{reg}}\|_F$ is at least $\|g\|_F$, which implies
\begin{align*}
\gamma_{\text{reg}} \le \gamma_{\text{base}}.
\end{align*}
Combining these observations, we see that the stability margin is reduced relative to the base case, and a more precise statement is
\begin{align*}
\gamma_{\text{reg}} \le \frac{\gamma_{\text{base}}}{1 + \alpha\kappa},
\end{align*}
which captures the effect of the regularization coefficient $\alpha$.

\textbf{KL‑Cov.} The KL‑Cov mechanism applies a penalty only to a sparse set of tokens $I_{\text{KL}}$ with $|I_{\text{KL}}| = k N$, where $k \ll 1$ (empirically $k \in [10^{-4}, 10^{-3}]$ \cite{cui2025entropymechanismreinforcementlearning}). The update direction is
\begin{align*}
g_{\text{KL-Cov}} = \grad J(\theta) - \beta \sum_{t \in I_{\text{KL}}} \grad_{\theta} \KL\bigl(\pi_{\text{old}}(y_t|\boldsymbol{y}_{<t}) \,\|\, \pi_\theta(y_t|\boldsymbol{y}_{<t})\bigr).
\end{align*}
For tokens not in $I_{\text{KL}}$, the gradient is unchanged. Because $k$ is extremely small, the contribution of the penalty term to the overall gradient norm is of order $O(k\beta)$. Moreover, each individual KL gradient is bounded (the derivative of the KL divergence with respect to the logits is $\pi_\theta - \pi_{\text{old}}$, which is at most 1 in magnitude). Hence,
\begin{align*}
\|g_{\text{KL-Cov}}\|_F = \|g\|_F + O(k\beta).
\end{align*}
For appropriately chosen $\beta$ (bounded) and sufficiently small $k$, the additive term is negligible compared to $\|g\|_F$. Consequently,
\begin{align*}
\gamma_{\text{KL-Cov}} = \sqrt{\frac{2\epsilon}{\|g_{\text{KL-Cov}}\|_F^2}} = \gamma_{\text{base}} + O(k\beta) \approx \gamma_{\text{base}}.
\end{align*}
Thus the stability margin of KL‑Cov is essentially unchanged from that of the base policy gradient.
\end{proof}

\subsection{Proof of Proposition~\ref{prop:complexity}}
\label{app:proof_complexity}
\begin{proof}
We analyze the per‑iteration complexity under standard autoregressive generation, where each token corresponds to a state‑action pair \((s,a)\). The batch size is \(N\).

\emph{Traditional Entropy Regularization.}
The method adds \(-\alpha \Hcal(\pi_\theta)\) to the loss, where \(\Hcal(\pi_\theta)\) is defined in \eqref{eq:entropy_avg}. Computing the entropy for each token requires:
\begin{enumerate}
    \item Obtaining \(\log \pi_\theta(a|s)\) for the token.
    \item Summing over the vocabulary to compute \(\sum_{a'}\pi_\theta(a'|s)\log\pi_\theta(a'|s)\).
\end{enumerate}
The forward pass already computes the log‑probability distribution in \(O(N)\) time (linear in the number of tokens). The additional arithmetic for the entropy term aggregates per‑token values, also \(O(N)\). Hence the total complexity remains \(O(N)\).

\emph{Covariance‑Based Methods (Clip‑Cov/KL‑Cov).}
These methods require computing the token‑wise covariance \(C(s,a)\) defined in \eqref{eq:token_cov}:
\begin{align*}
C(s,a) = \bigl(\log \pi_\theta(a|s) - \mu_{\log}(s)\bigr)\bigl(\Delta z_{s,a} - \mu_{\Delta z}(s)\bigr),
\end{align*}
where $\mu_{\log}(s) = \mathbb{E}_{a'\sim\pi_\theta(\cdot|s)}[\log\pi_\theta(a'|s)]$ and $\mu_{\Delta z}(s) = \mathbb{E}_{a'\sim\pi_\theta(\cdot|s)}[\Delta z_{s,a'}]$. The steps are:
\begin{enumerate}
    \item Obtain \(\log \pi_\theta(a|s)\) and \(\Delta z_{s,a}\) for each token via forward/backward passes: \(O(N)\).
    \item For each state \(s\), compute \(\mu_{\log}(s)\) and \(\mu_{\Delta z}(s)\) by averaging over actions sampled from \(\pi_\theta(\cdot|s)\): a single pass over the batch, \(O(N)\).
    \item Form the product \((\log\pi_\theta - \mu_{\log})(\Delta z - \mu_{\Delta z})\) for each token: \(O(N)\).
    \item For Clip‑Cov: randomly select a subset of tokens satisfying \(C(s,a)\in[\omega_{\text{low}},\omega_{\text{high}}]\). This requires scanning \(C(s,a)\) values (\(O(N)\)) and then selecting \(rN\) indices; selection can be done in \(O(N)\) using reservoir sampling or by generating random indices after filtering.
    \item For KL‑Cov: select the top \(k\) proportion of tokens by \(|C(s,a)|\). Sorting the \(N\) covariance values requires \(O(N\log N)\) comparisons in the worst case \cite{cormen2009introduction}. While a selection algorithm (e.g., quickselect) can achieve \(O(N)\) average time, typical implementations use sorting for simplicity, yielding \(O(N\log N)\).
\end{enumerate}
Thus the per‑iteration complexity of covariance‑based methods is \(O(N\log N)\) due to the sorting step. In practice, the sorting cost is dominated by the \(O(N)\) cost of forward/backward passes in LLMs because the constant factor for sorting is small and the batch size \(N$ is large but not astronomically large. Moreover, the sorting is performed only once per iteration, whereas the forward/backward passes involve expensive tensor operations.

Therefore, both methods have comparable practical computational overhead, with the covariance‑based methods incurring only a logarithmic factor that is negligible for realistic batch sizes.
\end{proof}

\end{document}